\newtheorem{definition}{Definition}
\newtheorem{theorem}{Theorem}
\newtheorem{corollary}{Corollary}
\newtheorem{lemma}{Lemma}
\newtheorem{prop}{Proposition}
\title{The blessing of transitivity in sparse and stochastic networks}%; a triptych on triples}
\author{Karl Rohe, Tai Qin}
\address{Department of Statistics, UW-Madison}
\email{name[at]stat.wisc.edu, for karlrohe or qin}
\begin{document}
\let\thefootnote\relax\footnote{Research supported in part by NIH Grant EY09946, NSF Grant DMS-0906818, and grants from WARF.}
\maketitle
\begin{abstract}
The interaction between transitivity and sparsity, two common features in empirical networks, implies that there are local regions of large sparse networks that are dense. We call this the blessing of transitivity and it has consequences for both modeling and inference. Extant research suggests that statistical inference for the Stochastic Blockmodel is more difficult when the edges are sparse. However, this conclusion is confounded by the fact that the asymptotic limit in all of the previous studies is not merely sparse, but also non-transitive.  To retain transitivity, the blocks cannot grow faster than the expected degree.  Thus, in sparse models, the blocks must remain asymptotically small.

Previous algorithmic research demonstrates that small ``local" clusters are more amenable to computation, visualization, and interpretation when compared to  ``global" graph partitions.  This paper provides the first statistical results that demonstrate how these small transitive clusters are also more amenable to statistical estimation.  Theorem 2 shows that a ``local" clustering algorithm can, with high probability, detect a transitive stochastic block of a fixed size (e.g. 30 nodes) embedded in a large graph.  The only constraint on the ambient graph is that it is large and sparse--it could be generated at random or by an adversary--suggesting a theoretical explanation for the robust empirical performance of local clustering algorithms. 
\end{abstract}

\section*{Introduction}

Advances in information technology have generated a barrage of data on highly complex systems with interacting elements.  Depending on the substantive area, these interacting elements could be metabolites, people, or computers. Their interactions could be represented in chemical reactions, friendship, or some type of communication.  Networks (or graphs) describe these relationships.  Therefore, the questions about the relationships in these data are questions regarding the structure of networks.  Several of these questions are more naturally phrased as questions of inference; they are questions not just about the realized network, but about the mechanism that generated the network.  To study questions in graph inference, it is essential to study algorithms under model parameterizations that reflect the fundamental features of the network of interest.  

Sparsity and transitivity are two fundamental and recurring features.  In sparse graphs, the number of edges in a network is orders of magnitude smaller than the number of possible edges; the average element has 10s or 100s of relationships, even in networks with millions of other elements. 
 Transitivity describes the fact that friends of friends are likely to be friends.  The interaction of these simple and localized features has profound implications for determining the set of realistic statistical models.  They imply that in large sparse graphs, there are local and dense regions.  This is the blessing of transitivity.

One essential inferential goal is the discovery of communities or clusters of highly connected actors.  These form essential feature in a multitude of empirical networks,
%\citep{lesk}  
and identifying these clusters helps answer vital scientific  questions in many fields.  A terrorist cell is a cluster in the communication network of terrorists; web pages that provide hyperlinks to each other form a community that might host discussions of a similar topic; a cluster in the network of biochemical reactions might contain metabolites with similar functions and activities.   Several papers, that are briefly reviewed below, have proved theoretical results for various graph clustering algorithms under the Stochastic Blockmodel, a parametric model, where the model parameters correspond to a true partition of the nodes.  Often, these estimators are also studied under the exchangeable random graph model, a non-parametric generalization of the Stochastic Blockmodel.  The overarching goal of this paper is to show   (1)  how sparse and transitive models require a novel asymptotic regime and (2) how the blessing of transitivity makes edges become more informative in the asymptote, allowing for statistical inference even when  cluster size and expected degrees do not grow with the number of nodes.

The first part of this paper studies how  sparsity and transitivity interact in the Stochastic Blockmodel, and more generally, in the exchangeable random graph model.   Interestingly, if a Stochastic Blockmodel is both sparse and transitive, then it has small blocks.  The second part of this paper (1) introduces an intuitive and fast local clustering technique to find small clusters; (2) proposes the local Stochastic Blockmodel, which presumes a single stochastic block is embedded in a sparse and potentially adversarially chosen network; and (3) proves that if the proposed local clustering technique is initialized with any point in the stochastic block, then it returns the block with high probability. Figure \ref{fig:epinionClusters} illustrates the types of clusters found by the proposed algorithm;  in this case from a social network on epinions.com containing over 76,000 people.

% While clusters describe a global feature of the network (i.e. partitioning the nodes into distinct groups), transitivity describes a local manifestation of clustering and communities.  In a transitive network, if nodes $i$ and $j$ are friends and nodes $j$ and $k$ are friends, then it is likely that $i$ and $k$ are friends.  \cite{harary1979matrix} introduced the ``transitivity ratio" as the ratio of number of closed triplets (i.e. triangles) over the number of connected triplets (i.e. two stars) to measure a network's transitivity.
%%\[\mbox{Transitivity Ratio} = \frac{\mbox{number of closed triplets}}{\mbox{number of connected triples of vertices}}.\]
%This measure is similar to the clustering coefficient proposed in \cite{Watts}.  Both can be thought of as estimates of the probability that two nodes will be connected, given that they have a common friend.   In empirical networks, this quantity is typically greater than $.1$ and can be as large as $.6$ or larger \citep{newman2009random}.  In social networks in particular, values between $.3$ to $.6$ are typical (\cite{snijders2011statistical}).  
%%XXX Tai, can you determine how to provide a citation to these lecture notes (see the .tex code, url in a comment).
%%\footnote{\verb"http://www.stats.ox.ac.uk/~snijders/Trans_Triads_ha.pdf", even in massive networks the clustering coefficient is rarely less than .1 \citep{SNAP}.} 

\begin{figure}[htbp] %  figure placement: here, top, bottom, or page
   \centering
   \includegraphics[height=1.62in]{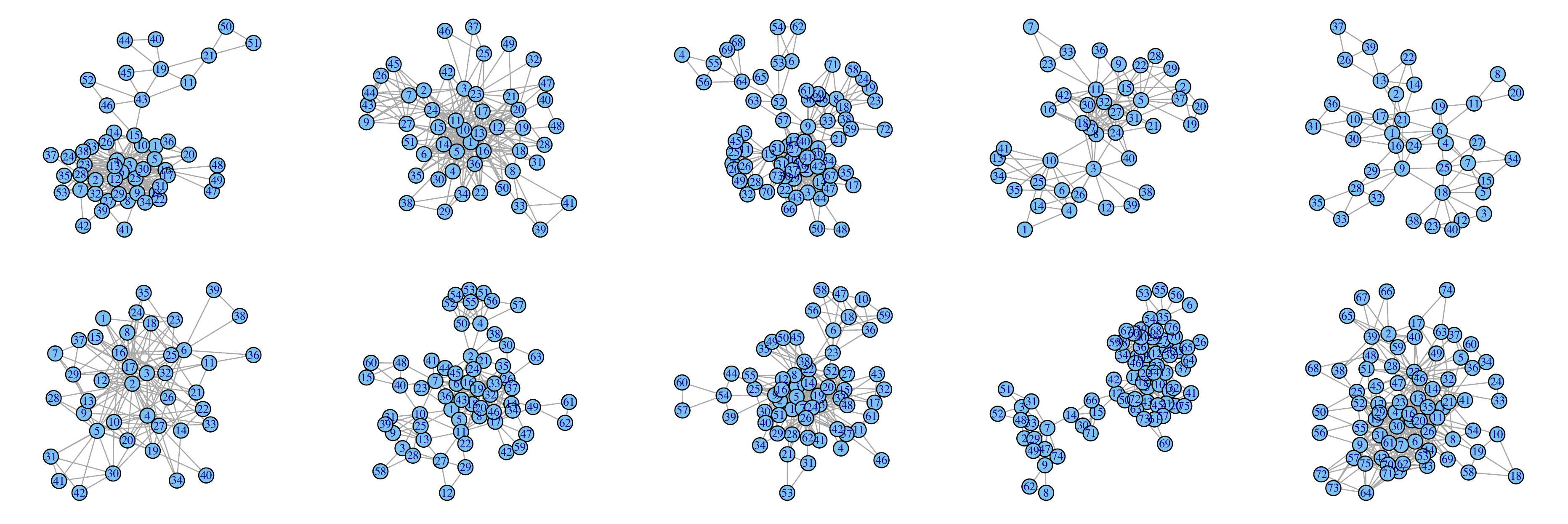} 
   \caption{Local clusters from a sparse 76k node social network from epinions.com.  Created with the igraph library in R \citep{igraph}.  }
   \label{fig:epinionClusters}
\end{figure}

\subsection{Preliminaries}
Networks, or graphs, are represented by a vertex set and an edge set, $G = (V,E)$, where $V = \{1, \dots, n\}$ contains the actors and 
\[E = \{(i,j) : \mbox{ there is an edge from $i$ to $j$}\}.\] 
The edge set can be represented by the adjacency matrix $A \in \{0,1\}^{n \times n}$:
\begin{equation} \label{Adef}
A_{ij} = \left\{\begin{array}{cl}1 & \textrm{ if  $(i,j) \in E$}  \\
0 &  \textrm{ otherwise.}\end{array}\right.
\end{equation}
This paper only considers undirected graphs.  That is, $(i,j) \in E \Rightarrow (j,i) \in E$. The adjacency matrix of such a graph is symmetric.  Many of the results in the paper have a simple extension to weighted graphs, where $A_{ij} \ge 0$.  For simplicity, we only discuss unweighted graphs.  For $i \in V$, let $d_i = \sum_\ell A_{i \ell}$ denote the degree of node $i$.  Define $N_i$ as the neighborhood of node $i$, 
\[N_i = \{j: (i,j) \in E\}.\] 

Define the transitivity ratio of $A$ as 
\[trans(A) = \frac{ \mbox{number of closed triplets in $A$}}{\mbox{number of connected triples of vertices in $A$}}.\]

%Both the numerator and the denominator of the transitivity ratio have other formulations that suggest how they can be  computed.  For example, let $\lambda_1 \ge \dots \ge \lambda_n$ denote the eigenvalues of $A$, then
%\begin{eqnarray*}
%&&\mbox{Number of triangles in $A$} = \tfrac{1}{6} \mbox{trace}(A^3) = \tfrac{1}{6} \sum_j \lambda_j^3\\
%&&\mbox{Number of 2-stars in $A$}= \sum_j {d_j \choose 2}  = \tfrac{1}{2}\sum_j d_j^2 - |E|,
%\end{eqnarray*}
%where $|E|$ is the total number of edges.  
 
 \cite{watts1998collective} introduced an alternative measure of transitivity, the clustering coefficient.  
The local clustering coefficient,  $C(i)$, is defined as the density of the subgraph induced by $N_i$, that is the number of edges between nodes in $N_i$ divided by the total number of possible edges.  The clustering coefficient for the entire network is the average of these values.
 \[C = \frac{1}{n} \sum_i C(i)\]
 This is related to the triangles in the graph because an edge $(j,k)$ between two nodes in $N_i$ makes a triangle with node $i$.  
 
\subsection{Statistical models for random networks}

Suppose $A = \{A_{ij}: i,j\ge 1\}$ is an infinite array that is binary, symmetric, and random. If $A$ is (jointly) exchangeable, that is
\[A \stackrel{d}{=} A_{\sigma, \sigma} = \{A_{\sigma(i), \sigma(j)}: i,j\ge 1\},\]
for any arbitrary permutation $\sigma$, then the Aldous-Hoover representation says there exists $i.i.d.$ random variables $\xi_1, \xi_2, \dots$ and an additional independent random variable $\alpha$ such that conditional on these variables, the elements of $A$ are statistically independent \citep{hoover1979relations, aldous1981representations,kallenberg2005probabilistic}.  
%Define $g$ as
%\begin{equation} \label{erg}
%g(\alpha, \xi_i, \xi_j) = P(A_{ij} = 1| \alpha, \xi_i, \xi_j).
%\end{equation}
The global parameter $\alpha$ controls the edge density and for ease of notation, it is often dropped \citep{bickel2011method}.  One should think of this result as an extension of DeFinetti's Theorem to infinite exchangeable arrays. 

While this representation has only been proven to be equivalent to exchangeability for infinite arrays, it is convenient to adopt this representation for finite graphs. 

\begin{definition}\label{exchmod}   Symmetric adjacency matrix $A \in \{0,1\}^{n \times n}$  follows the \textbf{exchangeable random graph model} if there exists i.i.d. random variables $\xi_1, \dots, \xi_n$ such that  probability distribution of  $A$ satisfies
\[P(A|\xi_1, \dots, \xi_n) = \prod_{i<j} P(A_{ij} | \xi_i,\xi_j).\]
\end{definition}
For brevity, we will sometimes refer to this as the exchangeable model. 

Independently of the research on infinite exchangeable arrays, \cite{hoff2002latent} proposed the latent space model which assumes that (1) each person has a set of latent characteristics (e.g. past schools, current employer, hobbies, etc.) and (2) it is only these characteristics that produce the dependencies between edges.  Specifically, conditional on the latent space characteristics, the relationships (or lack thereof) are independent.  The Latent Space Model is equivalent to the exchangeable model in Definition \ref{exchmod}.

%The exchangeable random graph model, also referred to as the latent space model, is a general nonparametric model provide a test bed to study the inferential capabilities of various algorithms as statistical estimators.  

The Stochastic Blockmodel is an exchangeable model that was first defined in \cite{hollandkathryn1983stochastic}. 
\begin{definition} 
The \textbf{Stochastic Blockmodel} is an exchangeable random graph model with $\xi_1, \dots, \xi_n \in \{1, \dots, K\}$ and 
\[P(A_{ij} = 1| \xi_i, \xi_j) = \Theta_{\xi_i, \xi_j}\]
for some $\Theta \in [0,1]^{K \times K}$.
\end{definition}
In this model, the $\xi_i$ correspond to group labels.  The diagonal elements of $\Theta$ correspond to the probability of within-block connections.  The off-diagonal elements correspond to the probabilities of between-block connections.  When the diagonal elements are sufficiently larger than the off-diagonal elements, then the sampled network will have clusters that correspond to the blocks in the model.

The next subsection briefly reviews the existing literature that examines the consistency of various estimators for the partition created by the latent variables $\xi_1, \dots, \xi_n$ in the Stochastic Blockmodel.

\subsection{Previous research}\label{previouslit}
This paper builds on an extensive body of literature examining various types of statistical estimators for the latent partition $\xi_1, \dots, \xi_n$ in the Stochastic Blockmodel.  These estimators fall into four different categories.\footnote{The works cited in this section give a sample of the previous literature on statistical inference for the Stochastic Blockmodel; it is not meant to be an exhaustive list.  In particular, there are several highly relevant papers in the Computer Science literatures on (i) the planted partition model and (ii) the planted clique problem.  The curious reader should consult the references in \cite{ames2010convex} and \cite{chen2012clustering}.
}  
\begin{enumerate}
\item Several have studied estimators that are solutions to  discrete optimization problems (e.g.  \cite{bickel2009nonparametric, choi2010stochastic,  zhao2011extraction, zhao2012degree, flynn2012consistent}).  These objective functions are the likelihood function for the Stochastic Blockmodel or the Newman-Girvan modularity \citep{newman2004finding}, a measure that corresponds to cluster quality.
\item Others have have studied various approximations to the likelihood that lead to more computationally tractable estimators.  For example, \cite{celisse2011consistency} and \cite{bickel2012asymptotic} studied the variational approximation to the likelihood function and \cite{chen2012fitting} studied the maximum pseudo-likelihood estimator.  
\item Building on on spectral graph theoretic results \citep{donath1973lower, fiedler1973algebraic}, several researchers have studied the statistical performance of spectral algorithms for estimating the partition in the Stochastic Blockmodel \citep{mcsherry2001spectral, dasgupta2004spectral, giesen2005reconstructing, coja2009finding, rohe2011spectral, rohe2012co, chaudhuri2012spectral, jin2012fast, sussman, fishkind2013consistent}.  Others have studied estimators that are solutions to semi-definite programs \citep{ames2010convex, oymak2011finding, chen2012clustering}.
\item More recently, \cite{bickel2011method, channarond2011classification, rohe2012co} have developed methods to stitch together network motifs, or simple ``local" measurements on the network,  in a way that estimates the partition in the Stochastic Blockmodel.  \cite{bickel2011method} draws a parallel between this motif-type of estimator and method of moments estimation.
\end{enumerate}

All of the previous results described above are sensitive to both (a) the population of the smallest block and (b) the expected number of edges in the graph;  larger blocks and higher expected degrees lead to stronger conclusions.  This limitation arrises because the proofs rely on some form of concentration of measure for a function of sufficiently many variables.  Bigger blocks and more edges yield more variables, and thus, more concentration.  This paper shows how transitivity leads to a different type of concentration of measure, where each edge becomes asymptotically more informative.  As such, the results in this paper extend to blocks of fixed sizes and bounded expected degrees.

Section \ref{clustering} proposes the \texttt{LocalTrans} algorithm that exploits the triangles built by network transitivity.  As such, it is most similar to motif-type estimators in bullet (4).  Our analysis of \texttt{LocalTrans} is the first to study whether a local algorithm (i.e. initialized from a single node) can estimate a block in the Stochastic Blockmodel.  The emphasis on local structure aligns with the aims of network scan statistics;  these compute a ``local" statistic on the subgraph induced by $N_i$ for all $i$ and then return the maximum over all $i$.  In the literature on network scan statistics, \cite{rukhin2012limiting} and \cite{scanPriebe} have previously studied the anomaly detection properties under random graph models, including a version of the Stochastic Blockmodel.

\section{Transitivity in sparse exchangeable random graph models} \label{sec:trans}

In this section, Proposition \ref{poptrans} and Theorem \ref{sampleTrans} show that previous parameterizations of the sparse exchangeable models and sparse Stochastic Blockmodels lack transitivity in the asymptote. That is, the sampled networks are asymptotically sparse, but they are not asymptotically transitive. 
Theorem \ref{sbmtrans} concludes the section by describing a parameterizations that produce sparse \textit{and} transitive networks.  

Define 
\begin{equation}\label{pmax}
p_{\max} = \max_{\xi_i, \xi_j}  P(A_{ij} = 1 |\xi_i, \xi_j)
\end{equation} 
as the largest possible probability of an edge under the exchangeable model.  
In the statistics literature, previous parameterizations of sparse Stochastic Blockmodels, and sparse exchangeable models, have all ensured sparsity by sending $p_{\max} \rightarrow 0$.

Define 
\begin{equation}\label{ptri}
p_\Delta = P(A_{uv} = 1| A_{iu} = A_{iv} = 1)
\end{equation}
a population measure of the transitivity in the model.  It is the probability of completing a triangle, conditionally on already having two edges. 

By sending $p_{\max}$ to zero, a model removes transitivity.
\begin{prop} \label{poptrans}
Under the exchangeable random graph model \eqref{exchmod} 
\[p_\Delta \le p_{\max},\]
where these probabilities are defined in equations \eqref{pmax} and \eqref{ptri}.
\end{prop}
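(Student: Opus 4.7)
The plan is to exploit the conditional independence built into Definition \ref{exchmod}: once we condition on the latent variables $\xi_i, \xi_u, \xi_v$, the three edges $A_{iu}, A_{iv}, A_{uv}$ factorize, so the event $\{A_{iu}=A_{iv}=1\}$ carries no residual information about $A_{uv}$ beyond what it tells us about the latents. Concretely, I would write
\[
p_\Delta = P(A_{uv}=1 \mid A_{iu}=A_{iv}=1) = E\bigl[P(A_{uv}=1 \mid \xi_i,\xi_u,\xi_v, A_{iu}, A_{iv}) \,\big|\, A_{iu}=A_{iv}=1\bigr]
\]
by the tower property, and then invoke conditional independence to collapse the inner probability to $P(A_{uv}=1 \mid \xi_u, \xi_v)$.

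From there the bound is immediate: by the definition \eqref{pmax} of $p_{\max}$, the random variable $P(A_{uv}=1 \mid \xi_u,\xi_v)$ is pointwise at most $p_{\max}$, so the outer conditional expectation is also at most $p_{\max}$.

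There is no real obstacle; the only thing to be careful about is making the conditional-independence step explicit. Definition \ref{exchmod} gives $P(A \mid \xi_1,\dots,\xi_n) = \prod_{i<j} P(A_{ij}\mid \xi_i,\xi_j)$, which says that the three edges $A_{uv}, A_{iu}, A_{iv}$ are mutually independent given all the latents, and that each edge's conditional law depends only on its own two endpoints' labels. In particular $A_{uv} \perp (A_{iu}, A_{iv}) \mid (\xi_i,\xi_u,\xi_v)$ and the conditional probability $P(A_{uv}=1\mid \xi_i,\xi_u,\xi_v)$ does not depend on $\xi_i$. Combining these two facts justifies the collapse in the step above, after which the inequality $p_\Delta \le p_{\max}$ follows in one line.
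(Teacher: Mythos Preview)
Your argument is correct. The paper does not actually supply a proof of Proposition~\ref{poptrans}; it is stated as an immediate consequence of the definitions and then the text moves directly on to Theorem~\ref{sampleTrans}. Your write-up is precisely the standard justification one would give: condition on the latent variables, use the factorization in Definition~\ref{exchmod} to strip $A_{iu},A_{iv}$ out of the inner conditional probability, and then bound $P(A_{uv}=1\mid\xi_u,\xi_v)$ pointwise by $p_{\max}$. There is nothing to add or to compare against.
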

The next theorem gives conditions that imply the transitivity ratio of the sampled network converges to zero. 
\begin{theorem}\label{sampleTrans} Under the exchangeable random graph model (Definition \ref{exchmod}), define $\lambda_n = E(d_{i})$ as the expected node degree.  If $\lambda_n\rightarrow \infty$, $\lambda_n = o(n)$, and 
\begin{equation}\label{pmaxfrac}
p_{\max} =  o\left(\frac{P(A_{ij} = 1)} {P(A_{ij} = 1|A_{i\ell} = 1)}\right),
\end{equation}
where $p_{\max}$ is defined in \eqref{pmax}, then $trans(A) \stackrel{P}{\rightarrow} 0$.
\end{theorem}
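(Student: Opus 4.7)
The plan is to write $trans(A) = 3T/P_2$, where $T$ is the number of triangles and $P_2 = \sum_i \binom{d_i}{2}$ is the number of connected triples (length-2 paths). I will show that $E[T]$ is small (controlling $T$ via Markov) while $P_2$ is large with high probability (via Cauchy-Schwarz together with concentration of $|E|$), and conclude that the ratio vanishes under the stated hypothesis.

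For the numerator, exchangeability gives $E[T] = \binom{n}{3}\, P(A_{12}=A_{13}=A_{23}=1)$. Writing $p = P(A_{ij}=1)$ and $p_2 = P(A_{13}=1 \mid A_{12}=1)$, this is $\binom{n}{3}\, p\, p_2\, p_\Delta \le \binom{n}{3}\, p\, p_2\, p_{\max}$, using Proposition \ref{poptrans} to bound $p_\Delta \le p_{\max}$.

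For the denominator, Cauchy-Schwarz yields $\sum_i d_i^2 \ge (\sum_i d_i)^2/n = 4|E|^2/n$, so $P_2 \ge 2|E|^2/n - |E|$. It then suffices to concentrate $|E|$ around its mean $\binom{n}{2}p \sim n\lambda_n/2$. The key structural fact is that in the exchangeable model $\mathrm{Cov}(A_{ij},A_{kl}) = 0$ whenever $\{i,j\} \cap \{k,l\} = \emptyset$ (because the $\xi$'s are i.i.d.), while each of the $n(n-1)(n-2)$ ordered pairs of distinct edges sharing exactly one vertex contributes $p p_2 - p^2$. A short computation then yields
\[\mathrm{Var}(|E|) = \binom{n}{2} p(1-p) + n(n-1)(n-2)(p p_2 - p^2) = O\bigl(n^2 p + n^3 p(p_2 - p)\bigr),\]
so $\mathrm{Var}(|E|)/(E|E|)^2 = O\bigl(1/(n\lambda_n) + (p_2-p)/\lambda_n\bigr) \to 0$ using $\lambda_n \to \infty$ and $p_2 \le 1$. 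Chebyshev's inequality then delivers $|E| \ge (1-o(1))\, n\lambda_n/2$ with high probability, whence $P_2 \ge c\, n\lambda_n^2 = c\, n^3 p^2$ with high probability, for a constant $c > 0$.

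Putting the two bounds together, for any $\epsilon > 0$,
\[P(trans(A) > \epsilon) \le P\!\left(T > \tfrac{\epsilon c}{3}\, n^3 p^2\right) + P(P_2 < c\, n^3 p^2).\]
The second term vanishes by the previous paragraph; the first is at most $3 E[T]/(\epsilon c\, n^3 p^2) = O\bigl(p_2 p_{\max}/(p \epsilon)\bigr) \to 0$ by the hypothesis $p_{\max} = o(p/p_2)$. Hence $trans(A) \xrightarrow{P} 0$. The step I expect to demand the most care is the $\mathrm{Var}(|E|)$ calculation, i.e.\ enumerating the overlap patterns of edge pairs in the exchangeable model and verifying that disjoint-vertex pairs contribute zero covariance; the rest reduces to Markov's inequality, Cauchy-Schwarz, and the already-proven Proposition \ref{poptrans}.
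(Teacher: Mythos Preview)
Your proposal is correct and follows essentially the same architecture as the paper's proof: bound $E[T]$ by $O(n^{3}\,p\,p_{2}\,p_{\max})$ via Proposition~\ref{poptrans}, lower-bound the path count $P_{2}$ through Cauchy--Schwarz in terms of $|E|$, concentrate $|E|$ about $n\lambda_{n}/2$, and conclude by Markov's inequality. The only difference is in how the concentration of $|E|$ is obtained: the paper imports the consistency of the empirical edge density from Bickel, Chen, and Levina, whereas you give a self-contained Chebyshev argument---your covariance computation, using that vertex-disjoint edges are uncorrelated under the exchangeable model, is correct and arguably cleaner.
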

A proof of this theorem can be found in the appendix.  

%For an alternative interpretation of the bound in Theorem \ref{sampleTrans}, the fraction can be re-expressed in terms of unconditional probabilities.
%\[\frac{P(A_{ij} = 1)} {P(A_{ij} = 1|A_{i\ell} = 1)} = \frac{P(A_{ij} = 1)P(A_{i\ell} = 1)} {P(A_{ij} = 1, A_{i\ell} = 1)}\]

The denominator on the right hand side of Equation \eqref{pmaxfrac} quantifies how many edges are adjacent to the average edge, and thus controls how many 2-stars are in the graph. It can be crudely bounded with the maximum expected degree over the latent $\xi_i$. For
\[\lambda_n^{\max} = \max_{\xi_i} \ E(d_i|\xi_i), \] 
it follows that 
\[ P(A_{ij} = 1|A_{i\ell} = 1) \le \lambda_n^{\max} / n.  \] 

\begin{corollary} \label{pmaxcor}
Under the exchangeable random graph model (Definition \ref{exchmod}), define $\lambda_n = E(d_{i})$ as the expected node degree.  If $\lambda_n\rightarrow \infty$, $\lambda_n = o(n)$, and 
\[p_{\max} =  o\left(\frac{\lambda_n} {\lambda_n^{\max}}\right),\]
then $trans(A) \stackrel{P}{\rightarrow} 0$.
\end{corollary}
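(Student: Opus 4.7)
The plan is to derive the corollary as a direct consequence of Theorem \ref{sampleTrans} by substituting a crude upper bound for $P(A_{ij} = 1 \mid A_{i\ell} = 1)$ into the fraction appearing in condition \eqref{pmaxfrac}. All of the real work has already been done in Theorem \ref{sampleTrans}; the corollary's job is just to repackage its hypothesis in a form that is easier to check in practice, since $\lambda_n$ and $\lambda_n^{\max}$ are more transparent quantities than the conditional probability ratio.

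First I would use exchangeability to express $P(A_{ij} = 1)$ cleanly: since $E(d_i) = \sum_{j \ne i} P(A_{ij} = 1)$ and all marginal edge probabilities are equal under the exchangeable model, we have $P(A_{ij} = 1) = \lambda_n/(n-1)$. Next I would invoke the bound $P(A_{ij} = 1 \mid A_{i\ell} = 1) \le \lambda_n^{\max}/n$ that is recorded in the excerpt just before the corollary's statement. Combining these two facts gives the inequality
\[
\frac{P(A_{ij} = 1)}{P(A_{ij} = 1 \mid A_{i\ell} = 1)} \;\ge\; \frac{\lambda_n/(n-1)}{\lambda_n^{\max}/n} \;=\; \frac{n}{n-1}\cdot\frac{\lambda_n}{\lambda_n^{\max}},
\]
so the right-hand side of \eqref{pmaxfrac} is bounded below, up to a factor of $n/(n-1) \to 1$, by $\lambda_n/\lambda_n^{\max}$.

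To close the argument I would use the standard monotonicity property of little-$o$: if $p_{\max} = o(\lambda_n/\lambda_n^{\max})$ and $\lambda_n/\lambda_n^{\max}$ is at most a constant multiple of $P(A_{ij}=1)/P(A_{ij}=1 \mid A_{i\ell}=1)$, then
\[
p_{\max} \;=\; o\!\left(\frac{P(A_{ij} = 1)}{P(A_{ij} = 1 \mid A_{i\ell} = 1)}\right).
\]
The remaining hypotheses $\lambda_n \to \infty$ and $\lambda_n = o(n)$ are carried over verbatim from the corollary's statement, so all conditions of Theorem \ref{sampleTrans} are satisfied and we conclude $trans(A) \stackrel{P}{\to} 0$.

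There is no serious obstacle here: the lone thing to double-check is that the inequality $P(A_{ij} = 1 \mid A_{i\ell} = 1) \le \lambda_n^{\max}/n$ genuinely holds for an arbitrary exchangeable model rather than only for the Stochastic Blockmodel, but the excerpt asserts this bound in the general setting just above the corollary, so I would simply cite the displayed inequality and move on. The corollary is really an exercise in replacing a conditional probability with an easier-to-interpret degree quantity.
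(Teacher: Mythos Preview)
Your proposal is correct and matches the paper's intended argument: the paper does not give a separate proof of the corollary but simply records the inequality $P(A_{ij}=1\mid A_{i\ell}=1)\le \lambda_n^{\max}/n$ immediately before the statement, leaving the substitution into condition~\eqref{pmaxfrac} of Theorem~\ref{sampleTrans} to the reader. Your write-up carries out exactly this substitution, and the little-$o$ monotonicity step and the handling of the harmless $n/(n-1)$ factor are both fine.
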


So, ensuring sparsity  by sending $p_{\max}$ to zero removes transitivity both from the model and from the sampled network.  The next subsection investigates the implications of restricting $p_{\max}>\epsilon>0$ in sparse networks.

\subsection{Implications of non-vanishing $p_{\max}$ in the Stochastic Blockmodel} 

It is easiest to consider a simplified parameterization of the Stochastic Blockmodel. The following parameterization is also called the planted partition model.
%To most easily see how sparsity and transitivity interact in the Stochastic Blockmodel, define the four parameter Stochastic Blockmodel, 
\begin{definition}\label{foursbm}
The \textbf{four parameter Stochastic Blockmodel} is a Stochastic Blockmodel with $K$ blocks, exactly $s$ nodes in each block, $\Theta_{ii} = p$, and $\Theta_{ij} =r$ for $i\ne j$.  
\end{definition}
In this model, (1) $n= Ks$, (2) the ``in-block" probabilities are equal to $p$, and (3) the ``out-of-block" probabilities equal to $r$.   Moreover, the expected degree\footnote{For ease of exposition, this formula allows self-loops.}  of each node is 
\begin{equation}\label{avgdeg}
\mbox{Expected degree under the four parameter model} = sp + (n-s)r.
\end{equation}
Define this quantity as $\lambda_n$.  Under the four parameter model, $p$ is analogous to $p_{\max}$.  %However,  the four parameter model conditions on the latent variables $\xi_i$ (to ensure each block has the same population).  As such, $p$ is 

Note that $sp\le \lambda_n$ and
\[n \frac{p}{\lambda_n} \le K.\]
In sparse  and transitive graphs, $\lambda_n$ is bounded and $p$ is  non-vanishing.    In this regime, $K$ grows proportionally to $n$. The following proposition states this fact in terms of $s$, the population of each block.
% does not grow nearly proportionally to $n$, then the probability of an in-block connection $p$ converges to zero.  Specifically, $K = o(n/d(n))$ implies that    $p \rightarrow 0$.
\begin{prop}
Under the four parameter Stochastic Blockmodel, if $p$ is bounded from below, then
\[s =O(\lambda_n)\]
where $s$ is the population of each block and $\lambda_n$ is the expected node degree.
\end{prop}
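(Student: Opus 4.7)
The plan is essentially a one-line calculation that falls out of the expected degree formula given in equation \eqref{avgdeg}, so the ``proof'' is really just a matter of organizing the inequalities correctly and noting that no other structure of the four parameter model is needed.

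First I would write down the expected degree expression from \eqref{avgdeg}, namely $\lambda_n = sp + (n-s)r$. Since $r \geq 0$ and $s \leq n$, both terms on the right are nonnegative, so in particular $\lambda_n \geq sp$. This step is the only real content, and it is entirely elementary.

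Next I would use the hypothesis that $p$ is bounded from below by some positive constant $c$, independent of $n$. Rearranging $\lambda_n \geq sp \geq cs$ yields $s \leq \lambda_n / c$, which is exactly the statement $s = O(\lambda_n)$. The implicit constant in the big-$O$ is $1/c$, where $c$ is the asymptotic lower bound on $p$.

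There is no real obstacle here. The proposition is essentially a rewriting of the definition of $\lambda_n$ under the assumption that $p$ does not vanish; the only subtlety worth flagging is that the conclusion relies on $r \geq 0$ (so that dropping the $(n-s)r$ term from $\lambda_n$ preserves the inequality), which is automatic since $r$ is a probability. No concentration, no asymptotic argument beyond interpreting ``bounded below'' and ``$O(\cdot)$'' in the usual sense, is needed.
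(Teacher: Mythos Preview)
Your proposal is correct and is exactly the argument the paper uses: the paper notes just before the proposition that $sp \le \lambda_n$ (from \eqref{avgdeg} and $r\ge 0$), and the proposition is then an immediate restatement once $p$ is bounded away from zero. There is nothing more to add.
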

%In the four parameter Stochastic Blockmodel, the expected degree is greater than $sp$.  If one wishes to bound the expected degree by $d(n)$, then $s \le d(n)/p$.  
%So, $d(n)$ restricts the rate at which $s$ can grow.  Namely, if $d(n)$ is bounded, then so is $s$, the size of each block. Because $n = Ks$, the slowest possible rate for the growth of $K$ is $n/d(n)$.  When the graph is sparse, this implies that $K$ is growing proportional to, or nearly proportionally to $n$. 

The following Theorem shows that graphs sampled from this parameterization are asymptotically transitive.
\begin{theorem} \label{sbmtrans}
%XXX Tai, can you put the HDSBM trans theorem here?  Here is a bit to get you started... XXX
Suppose that $A$ is the adjacency matrix sampled from the four parameter Stochastic Blockmodel (Definition \ref{foursbm}) with $n$ nodes.  If $p >\epsilon>0, \ r = O(n^{-1}),$ and $s \ge 3$, then as $n \rightarrow \infty$
\[trans(A) \stackrel{P}{\rightarrow} c >0,\]
where $c$ is a constant which depends on $p, r, s$.
\end{theorem}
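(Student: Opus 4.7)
The plan is to write $trans(A) = 3\, T(A)/S(A)$, where $T(A)$ is the number of triangles and $S(A) = \sum_{j=1}^n \binom{d_j}{2}$ is the number of 2-stars (connected triples with distinguished center), using that each triangle closes exactly three 2-stars. I will show that both $T(A)/n$ and $S(A)/n$ converge in probability to strictly positive constants depending only on $p,r,s$, and then invoke the continuous mapping theorem.

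For the numerator, decompose $T(A) = T_{\mathrm{in}} + T_{\mathrm{out}}$, where $T_{\mathrm{in}}$ counts triangles whose three vertices all lie in a single block. Under Definition~\ref{foursbm} the edges inside different blocks are mutually independent, so $T_{\mathrm{in}}$ is a sum of $K = n/s$ i.i.d.\ copies of the triangle count in $G(s,p)$, each bounded by $\binom{s}{3}$ with mean $\binom{s}{3} p^3$; the weak law of large numbers then gives $T_{\mathrm{in}}/n \stackrel{P}{\to} (s-1)(s-2) p^3 / 6$, strictly positive as soon as $s \geq 3$. Every triangle in $T_{\mathrm{out}}$ uses at least two cross-block edges, and a triple count in the two relevant cases (two vertices in one block and one in another, or all three in distinct blocks), together with $r = O(n^{-1})$, yields $E[T_{\mathrm{out}}] = O(n^2 r^2 + n^3 r^3) = O(1)$, so $T_{\mathrm{out}}/n \stackrel{P}{\to} 0$ by Markov.

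For the denominator, a direct calculation over the three possible block configurations of an unordered pair of neighbors $\{i,k\}$ around a center $j$ gives
\[
E[S(A)] \;=\; n \left[ \binom{s-1}{2} p^2 \,+\, (s-1)(n-s)\, p r \,+\, \binom{n-s}{2} r^2 \right] \;=\; \Theta(n).
\]
To pass from mean to probability limit I will apply Chebyshev: the 2-star indicator $A_{ij} A_{jk}$ is a product of two independent Bernoullis, and two such indicators are uncorrelated unless they share one of their two constituent edges. Enumerating the edge-sharing pairs using $p \leq 1$ and $nr$ bounded yields $\mathrm{Var}(S(A)) = O(n) = o(E[S(A)]^2)$. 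Hence, along any subsequence on which $nr \to \rho \in [0, \infty)$, we obtain $S(A)/n \stackrel{P}{\to} \tfrac{(s-1)(s-2)}{2}\, p^2 + (s-1)\, p\,\rho + \tfrac{\rho^2}{2}$, and the continuous mapping theorem gives
\[
trans(A) \;\stackrel{P}{\longrightarrow}\; \frac{(s-1)(s-2)\, p^3}{(s-1)(s-2)\, p^2 + 2(s-1)\, p\,\rho + \rho^2} \;=:\; c \;>\; 0.
\]
Since this limit is uniformly bounded below along every convergent subsequence, the convergence lifts to the full sequence. The main obstacle is the variance bound on $S(A)$: it is not deep but requires careful enumeration of which 2-star pairs can be correlated through a shared edge, whereas the rest of the argument reduces to WLLN, Markov, and elementary mean computations over independent Bernoullis.
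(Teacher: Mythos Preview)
Your proof follows essentially the same route as the paper's: write $trans(A)=3\Delta_n/S_n$, show that the triangle count per block converges in probability via the WLLN applied to the i.i.d.\ within-block triangle counts together with Markov's inequality for the cross-block triangles, show that the 2-star count per block converges in probability, and then take the ratio. The only substantive difference is in how you establish concentration for $S(A)$: you use a direct second-moment (Chebyshev) argument via $\mathrm{Var}(S(A))=O(n)$, whereas the paper applies a bounded-difference (McDiarmid-type) inequality after first restricting to the event $\{\max_i d_i\le \log n\}$ and controlling the complement. Both work here; your route is more elementary but requires the edge-sharing enumeration you flag as the main obstacle, while the paper's route avoids that combinatorics at the price of the truncation step.

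One small gap: your final subsequence argument does not yield what you claim. If $nr$ has two distinct subsequential limits $\rho\neq\rho'$, then along those subsequences $trans(A)$ converges in probability to two different constants, so the full sequence cannot converge in probability to any single $c$; a uniform lower bound on the subsequential limits does not repair this. The paper avoids the issue by writing $r=c_0/n$ from the outset (so $nr\to c_0$), which is the reading of ``$r=O(n^{-1})$'' that makes the statement ``$c$ depends on $p,r,s$'' meaningful. You should either adopt that convention or weaken the conclusion to $\liminf$/$\limsup$ bounds.
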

%XXXX Tai, I can't remember where we left this discussion.  Is $s\ge 3$ sufficient? XXX
\textbf{Remark:} If $r = \frac{c_0}{n}$, for some constant $c_0$, then 
\[ c \approx \frac{p^3s^2}{p^2s^2+c_0^2+2spc_0}.\]  
%XXX Tai, is this correct?  I changed the wording a bit... XXX

%Tai, I presume that we can tolerate $s$ growing.  So, make sure the conditions allow this.  I presume that the main condition is something like the expected number of out-of-block edges is asymptotically proportional to the expected number of in-block edges.

%This Theorem shows that there exist Stochastic Blockmodels that are asymptotically sparse and transitive.  Note that this proposition allows for the scenario when the expected degree is not growing asymptotically. 
The appendix contains the proof for Theorem \ref{sbmtrans}. %XX put in appendix XX. 

%In the planted clique problem in the computer science literature, there is set of $s$ nodes that are fully connected and embedded in a network of $n$ nodes where every edge not in the clique appears independently with probability $1/2$ \citep{ames2010convex}.  Under this planted clique problem, no one has found a polynomial time algorithm that can discover the planted clique with high probability when $s = o(\sqrt{n})$.  This makes one doubt whether the stochastic blocks are discoverable when $s$ is fixed.  
%
%XX how to link, planted clique, empricial lesk dunbar, and rmle motivated by lesk dunbar XXX?

The fixed block size asymptotics in Theorem \ref{sbmtrans} align with two pieces of  previous empirical research suggesting the ``best" clusters in massive networks are small.  \cite{leskovec2009community}  found that in a large corpus of empirical networks, the tightest clusters (as judged by several popular clustering criteria) were no larger than 100 nodes, even though some of the networks had several million nodes. This result is consistent with findings in Physical Anthropology. \cite{dunbar1992neocortex} took various measurements of brain size in 38 different primates and found that the size of the neocortex divided by the size of the rest of the brain had a log-linear relationship with the size of the primateÕs natural communities. In humans, the neocortex is roughly four times larger than the rest of the brain. Extrapolating the log-linear relationship estimated from the 38 other primates, \cite{dunbar1992neocortex} suggests that the average human does not have the social intellect to maintain a stable community larger than roughly 150 people (colloquially referred to as DunbarÕs number). \cite{leskovec2009community} found a similar result in several other networks that were not composed of humans. The research of \cite{leskovec2009community} and \cite{dunbar1992neocortex} suggests that the block sizes in the Stochastic Blockmodel should not grow asymptotically. Rather, block sizes should remain fixed (or grow very slowly).

\subsection{Implications for the exchangeable model} The interaction between sparsity and transitivity also has surprising implications in the more general exchangeable random graph model.  To see this, note that in the exchangeable model,  it is sufficient to assume that $\xi_1, \dots, \xi_n$ are $i.i.d.$ Uniform$(0,1)$ \citep{kallenberg2005probabilistic}.  Then, the conditional density of $\xi_i$ and $\xi_j$ given $A_{ij}= 1$ is 
\begin{equation}\label{cond}
c(\xi_i, \xi_j) = \frac{P(A_{ij} = 1| \xi_i, \xi_j)}{P(A_{ij} = 1)}  
\end{equation}
When $p_{\max}$ does not converge to zero, there exist values of $\xi_i^*$ and $\xi_j^*$ such that $P(A_{ij} = 1| \xi_i, \xi_j)$ does not converge to zero.  However, in a sparse graph, the edge density $P(A_{ij} = 1)$ (in the denominator of Equation \eqref{cond}) converges to zero.  So, $c(\xi_i^*, \xi_j^*)$ is asymptotically unbounded.  For example, in the popular $P(A_{ij} = 1) = O(1/n)$ limit, $c(\xi_i^*, \xi_j^*)$ is proportional to $n$.  In a sense, as a sparse and transitive network grows, each edge becomes more informative.  This is the blessing of transitivity in sparse and stochastic networks.

This asymptotic setting, where $p_{\max}$ is bounded from below, makes for an entirely different style of asymptotic proof; the asymptotic power comes from the fact that each edge becomes increasingly informative in the asymptote.  Previous consistency proofs rely on concentration of measure for functions of several independent random variables (i.e. several edges).  In the sparse and transitive asymptotic setting, concentration follows from the blessing of transitivity, allowing asymptotic results with fixed block sizes and bounded degrees.  For example, in Theorems \ref{Aclust} and \ref{Lclust} in the next section, neither the block size nor node degree grows in the asymptote.

\section{Local ( model + algorithm  + results )}\label{clustering}

This section investigates clustering, or community detection, in sparse and transitive networks.  Following the results of the last section,  sparse and transitive communities are small.  As such, this section is focused on finding small clusters of nodes.  In an attempt to strip away as many assumptions as possible from the Stochastic Blockmodel, this section 
\begin{enumerate}
\item proposes a ``localized" model with a small and transitive cluster embedded in a large and sparse graph (that could be chosen by an adversary), 
\item introduces a novel local clustering algorithm that explicitly leverages the graphs transitivity, and 
\item shows that this local algorithm will discover the cluster in the localized model with high probability.
\end{enumerate}
Similarly to the last section, the interaction between sparsity and transitivity provides for these results, enabling both the fast algorithm and the fixed block asymptotics.

\subsection{The local Stochastic Blockmodel} \label{localMod}
The ``local" Stochastic Blockmodel (defined below) presumes that a small set of nodes $S_*$ constitute  a single block and the model parameterizes how these nodes relate to each other and how they relate to the rest of the network.  
%In the theorem, the only assumption on the rest of the network is that the edges are sparse;  this part of the network could be deterministic (i.e. adversarial) or stochastic. 

%The local Stochastic Blockmodel defines how a fixed set of nodes $S_*$ relate to each other and how they relate to the rest of the network.  As for the rest of the network, the local model only assumes sparsity The section concludes with the local result in Theorem XX that implies with high probability (1) the local algorithm, initialized with any node in $S_*$, returns the set $S_*$  and (2) the global hierarchical clustering algorithm, cut at a specified level, will return the set $S_*$ as one of the clusters. 

\begin{definition} \label{localsbm}
 Suppose $A \in \{0,1\}^{(n+s) \times (n+s)}$ is an adjacency matrix on $n+s$ nodes.  If there is a set of nodes $S_*$ with $|S_*| = s$ and
\begin{enumerate}
\item $i,j \in S_*$ implies $P(A_{ij} =  1) \ge p_{in}$,
\item $i \in S_*$ and $j\in S_*^c$ implies $P(A_{ij} =1) \le p_{out}$,
\item the random variables $\{A_{ij}:  \forall i \in S_* \mbox{ and } \forall j \}$ are both mutually independent and also independent of the rest of the graph %and $j\ne j'$ implies $A_{ij}$ is independent of $A_{ij'}$,
\end{enumerate}
then $A$ follows the \textbf{local Stochastic Blockmodel} with parameters $S_*, p_{in}, p_{out}$.
\end{definition}

The only assumption that this definition makes about edges outside of $S_*$ (that is, $(i,j)$ with $i, j \not \in S_*$) is that they are independent of the edges that connect to at least one node in $S_*$.  So, the edges outside of $S_*$ could be chosen by an adversary, as long as the adversary  does not observe the rest of the graph.  The theorems below will add an additional assumption that the average degree (within $S_*^c$) must be not too large (i.e. it must be sparse).
%Of course, when $s$ is fixed and $n$ grows, 
%%th the most interesting asymptotic regime, $s$ remains fixed while $n$ grows.  So, 
%the induce subgraph on $S_*^c$ corresponds the majority of $A$, by far.

\subsection{Local clustering with transitivity} \label{localAlg}

%\textbf{Previous literature} 
%By minimizing the assumptions to include only ``local" assumptions for the edges connecting to $S_*$, the model echos a recent 
%Recently, a literature in computer science has focused on ``local" clustering algorithms that are initialized around a seed node and search for a tight community around this node.  
%Local clustering algorithms differ from standard clustering algorithms because 
A local  algorithm searches around a seed node for a tight community that includes this seed node.  Several papers have demonstrated the computational advantages of local algorithms for massive networks \citep{priebe2005scan,  spielman2008local}.   In addition to fast running times and small memory requirements, the local results are often more easily interpretable \citep{priebe2005scan} and yield what appear to be ``statistically regularized" results when compared to other, non-local techniques \citep{leskovec2009community,clauset2005finding,liao2009isorankn}.  \cite{andersen2006local, andersen2007detecting,andersen2009finding} have studied the running times and  given perturbation bounds showing that local algorithms can approximate the graph conductance.  With the exception of \cite{rukhin2012limiting} and \cite{scanPriebe}, the previous literature has not addressed the statistical properties of local graph algorithms under statistical models.

Given an adjacency matrix $A$ and a seed node $i$, this section defines an algorithm that finds a clusters around node $i$.  This algorithm has a single tuning parameter $cut$ that balances the size of the cluster with the tightness of the cluster.  Smaller values of $cut$ return looser clusters.  The algorithm initializes the cluster with the seed node $S = \{i\}$.  It then repeats the following step:  For every edge between a node in $S$ ($j \in S$) and a node not in $S$ ($\ell \in S^c$), add $\ell$ to $S$ if there are at least $cut$ nodes that connect to both $\ell$ and $j$ (this ensures that $(i,j)$ is contained in at least $cut$-many triangles).  Stop the algorithm if all edges across the boundary of $S$ are contained in fewer than $cut$-many triangles. % $(j, \ell)$ with $j \in S$ and  $\ell \in S^c$, $j$ and $\ell$ have fewer than $cut$ common neighbors.  
%Note that if $(j, \ell)$ is an edge, then the number of common neighbors between $j$ and $\ell$ is the number of triangles that contain the edge $(j, \ell)$.  As such, this algorithm measures the edgewise transitivity.  
%Later in the section, each triangle is weighted by the degrees of the nodes that compose the triangle.  

%In practice, it might help to normalize this quantity by a function of the degrees of node $j$ and $\ell$. 
%XXXXX
%
%Tai, can you try re-write the pseudo code for LocalTrans so that if someone were to exactly implement the algorithm, then it would be really fast?  In the current formulation, we say to compute $T$.  However, you only need to compute this matrix in the global version.  In the local version, you should only need one row at a time.  In the end, you should only need to compute $|S|$ rows of $T$.  Does that make sense? A nice thing about the current algorithm is that LocalTrans$(L,i,\tau)$ makes sense.  Ensure that this is still true in the new form. 
%
%I would suggest trying this package:  http://www.math-linux.com/latex-26/faq/latex-faq/How-to-write-algorithm-and  (make sure to copy the address from the .tex code).   This code forces you to write it out very cleanly... minimal english.  
%
%You might define an auxillary algorithm/function called tri (or whatever you want to call it).  $tri(i,\tau, A)$ returns the indices of the elements of $[AA]\cdot A$ that are greater than $\tau$.  
%XXXX
\begin{algorithm}
\caption{LocalTrans($A, i, cut$)}
\begin{algorithmic} 
\STATE 1. Initialize set $S$ to contain node $i$. 
%\WHILE{ $|S_*| \le size(A)$}
\STATE 2. For each edge $(i,\ell)$ on the boundary of $S$ ($i \in S$ and $\ell \notin S$) calculate $T_{i\ell}$:
\[T_{i\ell} = \sum_k A_{ik}A_{k\ell}.\]
\STATE 3. If there exists any edge(s) $(i,\ell)$ on the boundary of $S$ with $T_{i\ell} \ge cut$, then add the corresponding node(s) $\ell$ to $S$ and return to step 2. 
\STATE 4. Return $S$.

%Find node $s \in S_*$ and node $t \notin S_*$, such that 
%\[T_{st} = \max_{i \in S_*, j \notin S_*} T_{ij}.\]
%\IF{$T_{st} \ge cut$}
%\STATE add node $t$ to $S_*$
%\ELSE 
%\STATE return set $S_*$
%\ENDIF
%%\ENDWHILE

\end{algorithmic}
\end{algorithm}

%\vspace{.1in}

% \noindent
%\framebox[5.9 in][c]{  \begin{minipage}[c]{5.4in}
%\begin{large}
%\hspace{-.2in} \texttt{LocalTrans$(A,i,\tau)$:Local clustering with transitivity}
%
%\hspace{-.2in}  \texttt{Input: Adjacency matrix $A \in \{0,1\}^{n\times n}$, seed node $i$ and cutoff $\tau$.\\
%1. Get transitivity similarity matrix $T=(AA)\cdot A$, where $\cdot$ is element-wise multiplication;\\
%2. Initialize a set $S$ that only include node $i$;\\
%3. Iteratively adding nodes to $S$: at each iteration, find} 
%\[w=max_{j \in S, l \notin S} T(j, l),\quad l=argmax_{j \in S, l \notin S} T(j, l).\]
%\texttt{If $w \ge \tau$, add node $l$ to $S$, otherwise terminate.}
%
% \hspace{-.2 in} \texttt{Output: Local clusters $S$.} 
%\end{large}
% \end{minipage}
%}

\vspace{.2in}

Consider \texttt{LocalTrans}$(A,j,\tau)$ as a function that returns a set of nodes, then 
\[i \in \texttt{LocalTrans}(A,j,cut) \implies j \in \texttt{LocalTrans}(A,i,cut).\]  
Moreover, if $cut^+ > cut$ then, 
\[\texttt{LocalTrans}(A,i,cut^+) \ \subset \ \texttt{LocalTrans}(A,i,cut).\]  
This shows that the results of \texttt{LocalTrans}$(A,i, cut)$, for every node $i$ and every parameter $cut$, can be arranged into a dendogram.  %A fast  ``global" algorithm with an easy interpretation can compute this tree.
%Initialize $S = \{i\}$. \\
%For $i$ in $S$\\
%\ For $j \not \in S$ such that $(i,j) \in E$,\\
%\ \ If there exists at least $\tau$ nodes $k$ such that $(i,k) \in E$ and $(j,k) \in E$,
%\subsubsection{A global version}  %The solution to \texttt{LocalTrans} for all nodes $i$ and all tuning parameters $\tau$ can be arranged into a tree.  To see this, c
\texttt{LocalTrans} only finds one branch of the tree.  A simple and fast algorithm can find the entire tree.  

To compute the entire dendogram, apply single linkage hierarchical clustering\footnote{This is equivalent to finding the  maximum spanning tree} to the similarity matrix
\begin{equation}\label{defT}
T = (AA) \cdot A, \mbox{ where $\cdot$ is element-wise multiplication.}
\end{equation}
The computational bottleneck of this algorithm is computing $T$, which can be computed in $O(|E|^{3/2})$.  Techniques using fast matrix multiplication can slightly decrease this exponent \citep{alon1997finding}.
%Note that computing $T \in \mathcal{N}^{n \times n}$ only requires $[AA]_{ij}$ if $A_{ij} =1$.  As such, there are algorithms to compute $T$ in $O(|E|^{3/2})$.  Techniques using fast matrix multiplication can slightly decrease this exponent \citep{alon1997finding}.

When $A$ contains no self loops, $T_{ij}$ equals  the number of triangles that contain both nodes $i$ and $j$.  We propose \textit{single} linkage because it is the easiest to analyze and it yields good theoretical results.  However, in some simulation, \textit{average} linkage has performed better than single linkage.  One could also state a local algorithm in terms of average linkage.

\begin{algorithm}
\caption{GlobalTrans($A,\tau$)}
\begin{algorithmic} 
\STATE 1. Compute the similarity matrix $T = [AA]\cdot A$, where $\cdot$ is element-wise multiplication.
\STATE 2. Run single linkage hierarchical clustering on similarity matrix $T$, i.e. grow a maximum spanning tree.
\STATE 3. Cut the dendogram at level $\tau$, i.e. delete any edges in the spanning tree with weight smaller than $\tau$.
\STATE 4. Return the connected components.
\end{algorithmic}
\end{algorithm}

% \noindent
%\framebox[5.9 in][c]{  \begin{minipage}[l]{5.4in}
%\begin{large}
%\hspace{-.2in} \texttt{GlobalTrans$(A,\tau)$:Global clustering with transitivity}
%
%\hspace{-.2in}  \texttt{Input: Adjacency matrix $A \in \{0,1\}^{n\times n}$ and cutoff $\tau$.\\
%1. Get transitivity similarity matrix $T=(AA)\cdot A$, where $\cdot$ is element-wise multiplication;\\
%2. Grow a maximum spanning tree $T_{max}$ based on T;\\
%3. Delete the edges in $T_{max}$ that are smaller than $\tau$. This steps gives a new graph G'.}
%
% \hspace{-.2 in} \texttt{Output: each connected components of G'} 
%\end{large}
% \end{minipage}
%}
%
%\vspace{.2in}

\begin{prop}  Viewing \texttt{LocalTrans} as a function that returns a set of nodes and \texttt{GlobalTrans} as functions that returns a set of sets, $\texttt{LocalTrans}(A,i,\tau) \subset \texttt{GlobalTrans}(A, \tau)$. 
Moreover,
\[\bigcup_i \texttt{LocalTrans}(A,i,\tau) = \texttt{GlobalTrans}(A, \tau).\] 
\end{prop}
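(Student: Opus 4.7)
The plan is to reduce both algorithms to the same combinatorial object: the \emph{threshold graph} $G_\tau=(V,E_\tau)$ with $E_\tau=\{(i,j): T_{ij}\ge \tau\}$, where $T=(AA)\cdot A$. Since $T_{ij}=0$ whenever $A_{ij}=0$, every edge of $G_\tau$ is an edge of $A$. I will show that $\texttt{LocalTrans}(A,i,\tau)$ equals the connected component of $G_\tau$ containing $i$, and that $\texttt{GlobalTrans}(A,\tau)$ equals the set of all connected components of $G_\tau$. Both claims of the proposition follow immediately: $\texttt{LocalTrans}(A,i,\tau)$ is one of the blocks produced by $\texttt{GlobalTrans}(A,\tau)$ (the sense in which one ``is a subset of'' the other), and as the seed $i$ ranges over $V$ every block is hit.

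For $\texttt{LocalTrans}$, I would argue by induction on the iterations of step~3. The base case $S=\{i\}$ sits inside the component of $i$. For the induction step, a new node $\ell$ is added only if there is $j\in S$ with $A_{j\ell}=1$ and $T_{j\ell}\ge\tau$, i.e.\ a $G_\tau$-edge from $S$ to $\ell$; hence $S$ remains inside the component. Conversely, the halting condition in step~3 states that no boundary edge of the final $S$ has $T\ge\tau$, so no $G_\tau$-edge crosses the boundary of $S$. Therefore $S$ is $G_\tau$-closed and contains $i$, forcing $S$ to be the entire connected component of $i$ in $G_\tau$.

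For $\texttt{GlobalTrans}$, I would invoke the standard fact that cutting the single-linkage dendrogram (equivalently, the maximum spanning tree) at threshold $\tau$ yields as its clusters precisely the connected components of the subgraph whose edges have similarity at least $\tau$. Because the similarity matrix here is $T$, that subgraph is exactly $G_\tau$. Combining the two reductions gives $\texttt{LocalTrans}(A,i,\tau)\in \texttt{GlobalTrans}(A,\tau)$, and by choosing at least one seed from each component, $\bigcup_i \texttt{LocalTrans}(A,i,\tau)=\texttt{GlobalTrans}(A,\tau)$. The only mildly nontrivial ingredient is the single-linkage/connected-components equivalence, which is classical in agglomerative clustering; the rest is a direct unwinding of the two algorithm specifications against a common definition, so I do not anticipate a serious technical obstacle.
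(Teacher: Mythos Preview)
Your proposal is correct and follows essentially the same idea as the paper's proof: the paper's one-line argument is that $i$ and $j$ lie in the same cluster under either algorithm if and only if there is a path from $i$ to $j$ all of whose edges have $T\ge\tau$, which is exactly your threshold-graph/connected-components reduction. You have simply supplied the details (the induction for \texttt{LocalTrans} and the single-linkage equivalence for \texttt{GlobalTrans}) that the paper leaves implicit.
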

\begin{proof}
Nodes $i$ and $j$ are in the same cluster in both \texttt{LocalTrans}$(A,i,\tau)$ and \texttt{GlobalTrans}$(A,\tau)$ if and only if there exists a path from $i$ to $j$ such that every edge in the path is in at least $\tau$ triangles. 
\end{proof}

\subsection{Local Inference}

The next theorem shows that \texttt{LocalTrans} estimates the local block in the Local Stochastic Blockmodel with high probability.

%\begin{theorem}  Under the local Stochastic Blockmodel, if $p_{out} = o(1/\sqrt{\lambda n})$, then with probability greater than 
%\[lksdjfl;kasjdf\]
%the nodes in $S_*$ form a connected component in the weighted graph corresponding to $T$, as defined in Equation \ref{defT}.
%\end{theorem}
\begin{theorem} \label{Aclust} Under the local Stochastic Blockmodel (Definition \ref{localsbm}), if 
\[ \sum_{i,j \in S_*^c} A_{ij} \le n \lambda,\]
then
%if $p_{out} = O(n^{-1})$, then
\begin{enumerate}
\item \textbf{\textit{cut} = 1:} for all $ i \in S_*$, LocalTrans$(A,i,cut = 1) = S_*$  with probability greater than 
\[1 - \left(\frac{1}{2}s^2(1 - p_{in}^2)^{s - 2}     + O(p_{out}^2 n s (s+\lambda))\right).\]
%O(n^{-1})\right)\]

\item \textbf{\textit{cut} = 2:} for all $ i \in S_*$, LocalTrans$(A,i,cut = 2) = S_*$   with probability greater than 
\[1 - \left(s^3(1 - p_{in}^2)^{s - 3}     + O(p_{out}^3 n s(s + \lambda)^2)\right).\]
%O(n^{-2})\right)\]
\end{enumerate}
\end{theorem}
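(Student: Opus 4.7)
The plan is to decompose the failure event $\{\texttt{LocalTrans}(A,i,cut)\neq S_*\}$ into an inside failure $F_{\mathrm{in}}$ (some node of $S_*$ is never added) and an outside failure $F_{\mathrm{out}}$ (some node of $S_*^c$ is added), bound each by a seed-independent sufficient event, and union bound; this yields a probability bound that holds simultaneously over all seeds $i\in S_*$.

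For $F_{\mathrm{in}}$ I would introduce the event
\[\mathcal{E}_{cut}=\{\text{every pair } u,v\in S_*,\ u\neq v,\ \text{has at least } cut \text{ common neighbors in } S_*\setminus\{u,v\}\}.\]
Under $\mathcal{E}_{cut}$, every pair in $S_*$ satisfies $T_{uv}\ge cut$, so every existing edge inside $S_*$ is ``safe''; any non-adjacent pair $u,v\in S_*$ is still connected by a length-two safe path $u\to k\to v$ through a common neighbor $k\in S_*$, since each leg is itself safe under $\mathcal{E}_{cut}$. Thus, provided $F_{\mathrm{out}}$ does not occur, the algorithm from any seed $i\in S_*$ recovers all of $S_*$. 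For a fixed pair the count of common neighbors in $S_*\setminus\{u,v\}$ stochastically dominates $\mathrm{Bin}(s-2,p_{in}^2)$, so the Binomial lower tail gives a per-pair failure probability of at most $(1-p_{in}^2)^{s-2}$ for $cut=1$ and at most $s(1-p_{in}^2)^{s-3}$ for $cut=2$ (from $P(X=0)+P(X=1)$). A union bound over the $\binom{s}{2}$ pairs yields the leading $\tfrac12 s^2(1-p_{in}^2)^{s-2}$ and $s^3(1-p_{in}^2)^{s-3}$ terms.

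For $F_{\mathrm{out}}$, I would argue by induction on the order of additions: the first wrong addition of $\ell\in S_*^c$ can only come from an edge $(u,\ell)$ with $u\in S_*$. A union bound gives
\[P(F_{\mathrm{out}})\le\sum_{u\in S_*,\,\ell\in S_*^c}P(A_{u\ell}=1,\ T_{u\ell}\ge cut).\]
Condition 3 of the local Stochastic Blockmodel makes $A_{u\ell}$ independent of $T_{u\ell}$, and makes the summands $A_{uk}A_{k\ell}$ in $T_{u\ell}$ products of independent factors (even when $k\in S_*^c$, since $A_{uk}$ is incident to $S_*$ while $A_{k\ell}$ is internal to $S_*^c$). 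Splitting $k\in S_*$ versus $k\in S_*^c$ gives $E[T_{u\ell}]\le sp_{out}+p_{out}E[d_\ell]$, where $d_\ell$ is the $S_*^c$-degree of $\ell$. For $cut=1$, Markov gives $P(A_{u\ell}=1,T_{u\ell}\ge 1)\le p_{out}E[T_{u\ell}]$; summing over $(u,\ell)$ and using $\sum_\ell E[d_\ell]\le n\lambda$ produces $O(p_{out}^2\,ns(s+\lambda))$. For $cut=2$, I would use the factorial-moment bound $P(T_{u\ell}\ge 2)\le\tfrac12 E[T_{u\ell}(T_{u\ell}-1)]\le\tfrac12 E[T_{u\ell}]^2$, giving $\tfrac12 p_{out}^3(s+E[d_\ell])^2$ per pair and, after summation, $O(p_{out}^3\,ns(s+\lambda)^2)$.

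The main technical obstacle will be the $cut=2$ outside bound, specifically the term $p_{out}^3\sum_\ell E[d_\ell]^2$ that appears after the factorial-moment step: the hypothesis controls only $\sum_\ell E[d_\ell]\le n\lambda$, not $\sum_\ell E[d_\ell]^2$. The clean $(s+\lambda)^2$ factor should emerge either by treating the degree budget as effectively bounding a typical $d_\ell$ (so $\sum_\ell d_\ell^2=O(n\lambda^2)$) or by re-grouping the two triangle completions so they share a factor that can be summed directly against the linear degree constraint. The remaining steps---verifying the independence claims under Condition~3, the $cut=2$ Binomial-tail approximation, and the inductive step on the order of additions---are routine calculations.
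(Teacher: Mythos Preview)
Your proposal is essentially the paper's own argument. The paper defines events $D_\alpha$ (every pair in $S_*$ has $\ge\alpha$ common neighbors) and $B_\alpha$ ($S_*$ is $T$-separated from $S_*^c$), which are exactly your $\mathcal{E}_{cut}$ and $F_{\mathrm{out}}^c$; the reduction $D_\alpha\Rightarrow C_\alpha$ is proved by the same ``common neighbor gives a safe two-step path'' observation you sketch, and the $\mathrm{Bin}(s-2,p_{in}^2)$ tail with a union over $\binom{s}{2}$ pairs gives the identical leading terms. For the outside failure the paper does not phrase things via Markov and the second factorial moment but instead writes out the union bound over triangle configurations $(i,j,k)$ for $cut=1$ and over double configurations $(i,j,k,l)$ for $cut=2$, splitting according to whether $k$ (and $l$) lie in $S_*$ or $S_*^c$; since $T_{u\ell}$ is a sum of independent Bernoullis, these are the same computation in different notation.

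On your flagged obstacle for $cut=2$: you are right that the hypothesis only controls $\sum_{\ell} d_\ell^*$, whereas the bound naturally produces $\sum_\ell (d_\ell^*)^2$. The paper does not resolve this either. In its Lemma bound for $P(B_2^c)$ the case $k,l\in S_*^c$ is written as $\tfrac12 n^2 p_{out}^3(\lambda/n)^2$, i.e.\ it treats each fixed edge in $S_*^c$ as if it occurred with the \emph{average} density $\lambda/n$; unwinding that step gives precisely $\tfrac12 p_{out}^3\sum_j (d_j^*)^2$ and the substitution $\sum_j (d_j^*)^2\to n\lambda^2$ is the same implicit degree-regularity assumption you isolate. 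So your first suggested fix (take $\sum_\ell (d_\ell^*)^2=O(n\lambda^2)$ as part of the sparsity budget) is exactly what the paper is tacitly using; your ``re-grouping'' alternative does not seem to avoid this without some such control on the second moment of the $S_*^c$ degree sequence.
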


See the appendix for the proof of this theorem.

For the local Stochastic Blockmodel to create a transitive block with bounded expected degrees, it is necessary for $p_{in}$ to be bounded from below and for $s$ to be bounded from above.  Then, $p_{out} = O(1/n)$ is a sufficient condition for bounded expected degrees.  
%If, in addition, $p_{out} = O(
%
%For a large $p_{in}$, the nodes in $S_*$ create a transitive block.  The nodes in this block have expected degree bounded between $sp_{in}$ and $s + n p_{out}$.  So, when $p_{in}$ is non-vanishing and the expected degrees are bounded,  $s$ is also bounded.  When $s$ is bounded and $p_{out} = O(1/n)$, then the expected degrees are bounded.  
However, because of the inequality in bullet (2) of Definition \ref{localsbm}, $p_{out} = O(1/n)$ is not a necessary condition for sparsity.  As such, the restriction on $p_{out}$ is particularly relevant.  It is also fundamental to the bound in Theorem \ref{Aclust}.

The approximation terms $O(p_{out}^2 n s (s+\lambda))$ and $O(p_{out}^3 n s(s + \lambda)^2)$ bound the probability of a connection in $T$ across the boundary of $S_*$.  The strength of these terms come from the fact that a connection across the boundary requires $cut+1$ simultaneous edges across the boundary of $S_*$.  Figure \ref{fig:2tri} gives a graphical explanation for the ``+1".  As such, $p_{out}$ is raised to the \textit{cut +1} power.  When $s$ and $\lambda$ are fixed, then $p_{out} = o(n^{-1/2})$ and $p_{out}  = o(n^{-1/3})$ make the approximation term asymptotically negligible for the case $cut = 1$ and $cut =2$, respectively.  Both of these settings allow the nodes in $S_*$ have (potentially) large degrees.  If $p_{out} = O(n^{-1})$, then the approximation terms become $O(n^{-1})$ and $O(n^{-2})$ respectively.

\begin{figure}[htbp] %  figure placement: here, top, bottom, or page
   \centering
   \includegraphics[width=2in]{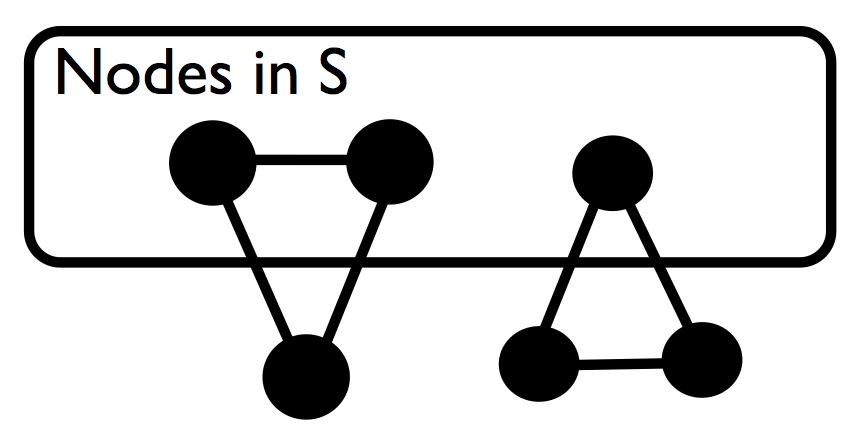} 
   \caption{This figure illustrates the two types of triangles that contain nodes in both $S_*$ and $S_*^c$.  To make \textit{one} triangle that crosses the boundary of $S$ requires \textit{two} edges to cross the boundary. 
%   In both types of triangles, there are two edges that cross the boundary.  Under the local Stochastic Blockmodel, each of these edges occur with probability $p_{out}$.  So, the probability of any specific triangle is less than or equal to $p_{out}^2$.  Because this term is squared, it allows for non-trivial $p_{out}$ probabilities in Theorem \ref{Aclust}.
}
   \label{fig:2tri}
\end{figure}

Theorem 2 and the algorithms \texttt{LocalTrans} and \texttt{GlobalTrans} all leverage the interaction between transitivity and sparsity, making the task of computing $S$ and estimating $S_*$ both algorithmically tractable and statistically feasible.  
%Theorem 2 makes a stronger conclusion ($p_{in} < 1$ and $|S|$ fixed) than is likely possible under the planted clique problem; the interaction between transitivity and sparsity provide the basic ingredients.

%Under the local sbm, the expected local clustering coefficient (see equation \ref{localcc}) for the nodes in $S_*$ can range between...  This suggests that this model allows for a wide stretch of transitivity.
 
\subsection{Preliminary Data Analysis}
This section applies \texttt{GlobalTrans} to an online social network from the website slashdot.org, demonstrating the shortcomings of the proposed algorithms with input $A$ and motivating the next section that uses the graph Laplacian as the input.  The slashdot network contains 77\,360 nodes with an average degree of roughly 12 \citep{leskovec2009community}.\footnote{This data can be downloaded at \texttt{http://snap.stanford.edu/data/soc-Slashdot0811.html}}  This network is particularly interesting because it has a smaller transitivity ratio $(.024)$ than the typical social network.  %This suggests that large parts of the graph have very little local structure.

\begin{figure}[htbp] %  figure placement: here, top, bottom, or page
   \centering
   \includegraphics[width=5in]{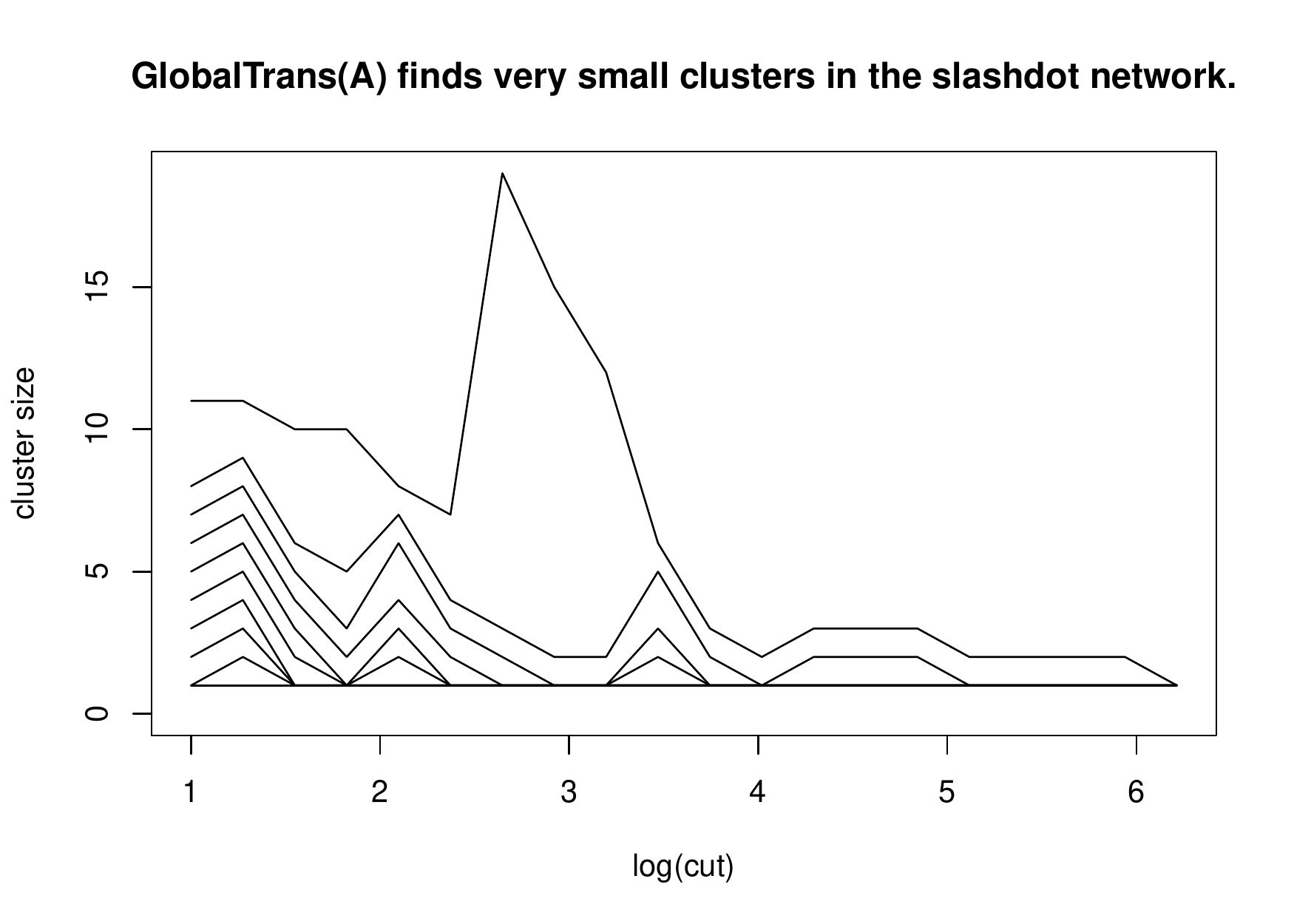}
   \caption{This plots the number of nodes in the largest ten clusters (ignoring a single giant cluster) found by \texttt{GlobalTrans}$(A, cut)$ in the slashdot social network.   These clusters are very small, and probably too small for many applications.  Moreover, there are not that many of them.  %The next section (1) provides a possible explanation for this,  (2) suggests a slight change to the algorithm, and (3) demonstrates that it finds more reasonably sized clusters.
   }
   \label{fig:A}
\end{figure}

Figure \ref{fig:A} plots the size of the ten largest clusters  returned by \texttt{GlobalTrans}$(A, cut)$ as a function of $cut$ (excluding the largest cluster that consists of the majority of the graph).  The values of $cut$ range from  3 to 500 and they are plotted on the $\log_{e}$ scale.  Over this range of $cut$, only two times does a cluster exceed ten nodes.  While we motivated the local techniques as searching for small clusters,  these clusters are perhaps \textit{too} small.  It suggests that there are no clusters that are adequately described by the local Stochastic Blockmodel.  

One potential reason for this failure is that under the Local Stochastic Blockmodel, the probability of a connection between a node in $S_*$ and a node in $S_*^c$ is uniformly bounded by some value, $p_{out}$. The slashdot social network, like many other empirical networks, has a long tailed degree distribution.  A more realistic model might allow the nodes in $S_*$ to be more highly connected to the high degree nodes in $S_*^c$.  The next subsection (1)  proposes a ``degree-corrected" local Stochastic Blockmodel, (2) proves that \texttt{LocalTrans} with a simple adjustment can estimate $S_*$ in the degree corrected model, and (3) demonstrates how this new version of the algorithm improves the results on the slashdot social network.

\section{The Degree-corrected Local Stochastic Blockmodel} \label{degClustering}
Inspired by \cite{karrer2011stochastic}, the degree-corrected model in Definition \ref{degLocal} makes the probability of a connection between a node $i \in S_*$ and a node $j \not \in S_*$ scale with the degree of node $j$ on the subgraph induced by $S_*^c$.  
\begin{equation}\label{dstardef}
d_j^* = \sum_{\ell \in S_*^c} A_{\ell j}
\end{equation}
  For the following definition to make sense, we presume that $d_j^*$ is fixed for all $j \in S_*^c$.

\begin{definition} \label{degLocal} Suppose $A \in \{0,1\}^{(n + s) \times (n  +s)}$ is an adjacency matrix and $S_*$ is a set of nodes with $|S_*| = s$.  For $j \in S_*^c$, define $d_j^*$ as in Equation \eqref{dstardef}.  If
\begin{enumerate}
\item $i \in S_*$ and $j\in S_*^c$ implies 
\[P(A_{ij} =1) \le \frac{d_j^*}{n},\]
\item $i,j \in S_*$ implies $P(A_{ij} =  1) \ge p_{in}$,
\item $\{A_{ij}: \forall j \mbox{ and }  \forall i \in S_* \}$ are mutually independent
%\item $ \sum_{i,j \in S_*^c} A_{ij} \le n \lambda,$
\end{enumerate}
then $A$ follows the \textbf{local degree-corrected Stochastic Blockmodel} with parameters $S_*, p_{in}$.
\end{definition}
The fundamental difference between the previous local model and this degree corrected version is the assumption that if  $i \in S_*$ and $j\in S_*^c$, then
\[P(A_{ij} =1) \le \frac{d_j^*}{n}.\]
In the previous model, $P(A_{ij} = 1) \le p_{out}$. This new condition can be interpreted as $P(A_{ij} =1) \le p_{out} \hspace{.04 in} d_j^* $ for $p_{out} =1/n$. In this degree-corrected model, the nodes in $S_*$  connect to more high degree nodes than they do under the previous local model.  

The degree corrected model creates two types of problems for \texttt{LocalTrans}$(A,i,\tau)$.  Because the high degree nodes in $S_*^c$ create many connections to the nodes in $S_*$, it is more likely to create triangles with two nodes in $S_*$.  Additionally, by definition, the high degree nodes outside of $S_*$ have several neighbors outside of $S_*$.  As such, it is more likely to create triangles with one node in $S_*$ and two nodes outside of $S_*$.  In essence, the high degree nodes create several triangles in the graph, washing out the clusters that \texttt{LocalTrans}$(A,i,\tau)$ can detect.  To confront this difficulty, it is necessary to down weight the triangles that contain high degree nodes.

%In the standard degree-corrected Stochastic Blockmodel, the probability of a connection between $i$ and $j$ is  
%\[P\left(A_{ij} =1 |(\xi_i, \theta_i),  (\xi_j,\theta_j)\right)  = \frac{\Theta_{\xi_i, \xi_j} \sqrt{\theta_i \theta_j}}{n},\] 
%where $\xi_i$ and $\xi_j$ are the block labels, $\Theta$ is the standard $K\times K$ matrix of probabilities, and $\theta_i$ is a latent parameter that allows for heterogeneous degrees.  Under certain parameterizations, it is the expected degree or proportional to the expected degree.  Where the standard degree corrected model presumes that the (expected) degree enters as $\sqrt{E(d_j)}$
%
%A fundamental difference between the local degree corrected model and the standard degree corrected model is that the local model presumes that connections between $i \in S_*$ and $j \in S_*^c$ scale \textit{linearly} with the degree of $j$ (within $S_*^c$)

\subsubsection{The graph Laplacian}

%The spectral clustering algorithm, which down weights the high degree nodes with the normalized graph Laplacian , 
%
%To confront the heterogeneous degrees by passing the normalized graph Laplacian (instead of the adjacency matrix) to the \texttt{GlobalTrans} and \texttt{LocalTrans} algorithms.

Similarly to the adjacency matrix, the normalized graph Laplacian represents the graph as a matrix.  In both spectral graph theory and in spectral clustering, the graph Laplacian offers several advantages over the adjacency matrix \citep{chung1997spectral, von2007tutorial}.  The spectral clustering algorithm uses the eigenvectors of the normalized graph Laplacian,  not the adjacency matrix, because the normalized Laplacian is robust to high degree nodes \citep{von2007tutorial}.

For adjacency matrix $A$,  define the diagonal matrix $D$ and the normalized graph Laplacian $L$, both elements of $R^{n \times n}$, in the following way
\begin{equation}
	\label{Ldef}
    \begin{array}{rll}
D_{ii} & = & d(i)  \\
L_{ij} &=& [{D}^{-1/2}A{D}^{-1/2}]_{ij} = \frac{A_{ij}}{\sqrt{D_{ii}D_{jj}}} .
\end{array}
\end{equation}
Some readers may be more familiar defining $L$ as $I - D^{-1/2}WD^{-1/2}$. For our purposes, it is necessary to drop the $I-$.    

The last section utilized the matrix $T = [AA]\cdot A$ to find the triangles in the graph. To confront the degree corrected model, the next theorem uses $[LL]\cdot L$ instead.  The interpretation of this matrix is similar to $T$.  It differs because it down weights the contribution of each triangle by the inverse product of the node degrees.  For example,  where a triangle between nodes $i,j,k$ would add 1 to element $T_{ij}$, it would add $(d(i)d(j)d(k))^{-1}$ to the $i,j$th element of $[LL]\cdot L$.  

Some versions of spectral clustering use the random walk graph Laplacian, an alternative form of the normalized graph Laplacian.
\[L_{RW} = D^{-1}A\]
While the algorithmic results from spectral clustering can be depend on the choice of graph Laplacian,
%our algorithm is independent of the choice of normalization; the analogue of the matrix $T$ constructed with $L$ is equivalent to the construction with $L_{RW}$.  As such, 
\texttt{LocalTrans} returns exactly the same results with $L$ as it does with $L_{RW}$.    To see this, first imagine that if the graph is directed, then $A$ is asymmetric, and for $T$ to correspond to \textit{directed} cycles of length three, it is necessary to take the transpose of the final $A$,  that is $[AA]\cdot A^T$.  Since $L_{RW}$ is asymmetric, it is reasonable to use the additional transpose from the directed formulation.  It is easy to show that 
\begin{equation} \label{equivalentLap}
[LL]\cdot L = [L_{RW}L_{RW}]\cdot L_{RW}^T. 
\end{equation}

\cite{chaudhuri2012spectral} and \cite{chen2012fitting} have recently proposed a ``regularized" graph Laplacian.  \cite{chaudhuri2012spectral} propose replacing $D$ with $D_\tau = D + \tau I$, where $\tau >0$ is a regularization constant. They show that a spectral algorithm with
\[L_\tau = D_\tau^{-1/2}AD_\tau^{-1/2} \ \] 
has superior performance on sparse graphs.  Similarly, it will help to use $L_\tau$ with \texttt{LocalTrans}.  (Note that the equivalence in Equation \eqref{equivalentLap} still holds with the regularized versions of the Laplacians.)

The next theorem shows that under the local degree-corrected model --- with the regularized graph Laplacian, a specified choice of tuning parameter $cut$, and $i \in S_*$ ---
%$S_* \subset$ \texttt{GlobalTrans}$(L_\tau, cut)$ and
the estimate \texttt{LocalTrans}$(L_\tau, i, cut) = S_*$  with high probability.  Importantly, using $L_\tau$ instead of $A$ allows for reasonable results under the degree-corrected model.

\begin{theorem} \label{Lclust}
Let $A$ come from the local degree-corrected Stochastic Blockmodel.  Define $\lambda$ such that 
\begin{equation} \label{lambdaBound}
 \sum_{i,j \in S_*^c} A_{ij} \le n \lambda.
\end{equation}
Set $cut = [2(s-1)p_{in} + 2\lambda+\tau]^{-3}$.
If 
\[n \ge 3\left(2(s-1)p_{in} + 2\lambda+\tau\right)^{3/\epsilon}\tau^{-1/\epsilon},\]
 and $s \ge 3$, then for any $ i \in S_*$,
\[\texttt{LocalTrans}(L_\tau, i, cut) = S_*\]
 with probability at least 
\[1 - \left(\nicefrac{1}{2} \ s^2(1 - p_{in}^2)^{s - 2}+ s \exp\left(- \nicefrac{1}{4} \ sp_{in} + \lambda\right) + O(n^{3\epsilon - 1})\right).\]

%XXXX Tai, should the O() term be inside the $\exp$? I don't think soXXXX
%where  $0< \epsilon <\frac{1}{3}$
%, if 
%\[n \ge (\frac{3\{2(s-1)p_{in} + 2\lambda+\tau\}^3}{\tau})^{1/\epsilon},\]
%,  with
%\[cut = \frac{1}{\{2(s-1)p_{in} + 2\lambda+\tau\}^3},\]
% correctly identifies $S^*$ with probability at least 
\end{theorem}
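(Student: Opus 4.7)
My plan is to mirror the proof of Theorem \ref{Aclust} but adapt it to the weighted matrix $L_\tau$ and the degree-corrected model, defining three good events whose complements contribute the three terms in the stated failure probability, and verifying that on their intersection \texttt{LocalTrans} adds every node of $S_*$ and rejects every node outside $S_*$. Write $M = 2(s-1)p_{in} + 2\lambda$, so that $cut = (M+\tau)^{-3}$.

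The first good event $E_1$ is that every pair $i,\ell \in S_*$ has a common neighbor in $S_* \setminus \{i,\ell\}$. Since $P(A_{ik}A_{k\ell}=1) \ge p_{in}^2$ and the $k$'s contribute independently, a union bound over the $\binom{s}{2}$ pairs gives $P(E_1^c) \le \frac{1}{2}s^2(1-p_{in}^2)^{s-2}$. The second good event $E_2$ is that $D_{ii} \le M$ for every $i \in S_*$. The expected degree of such a node is at most $(s-1)p_{in}+2\lambda$ (the second term uses $\sum_{j\in S_*^c} d_j^* \le 2n\lambda$), and a Chernoff upper-tail bound followed by a union bound over the $s$ nodes yields $s\exp(-sp_{in}/4+\lambda)$.

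On $E_1 \cap E_2$, for any interior edge $(i,\ell)$ with $i,\ell\in S_*$ the Laplacian entry satisfies $(L_\tau)_{i\ell} \ge (M+\tau)^{-1}$, and the triangle-completing $k$ supplied by $E_1$ contributes $(L_\tau)_{ik}(L_\tau)_{k\ell} \ge (M+\tau)^{-2}$, so the element $T_{i\ell} = [L_\tau L_\tau \cdot L_\tau]_{i\ell} \ge (M+\tau)^{-3} = cut$. The third good event $E_3$ is that every boundary edge $(i,\ell)$ with $i\in S_*$ and $\ell \in S_*^c$ has $T_{i\ell} < cut$. On $E_1 \cap E_2 \cap E_3$, \texttt{LocalTrans} started at any $i\in S_*$ therefore grows $S$ exactly to $S_*$ and halts, proving the theorem.

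The main obstacle is bounding $P(E_3^c)$ by $O(n^{3\epsilon-1})$. A boundary-crossing triangle $(i,k,\ell)$ comes in two types: either $k \in S_*$, which requires two boundary edges whose probabilities are at most $d_k^*/n$ and $d_\ell^*/n$; or $k \in S_*^c$, which requires the boundary edge $(i,k)$ of probability at most $d_k^*/n$ together with an internal $S_*^c$-edge $(k,\ell)$. In each case the Laplacian normalization damps the triangle by the factor $(D_k+\tau)^{-1}[(D_i+\tau)(D_\ell+\tau)]^{-1/2}$, keeping high-degree hubs in $S_*^c$ under control. Summing $E[T_{i\ell}]$ over boundary edges, using $\sum_j d_j^* \le 2n\lambda$ and the regularization $\tau$ to keep every denominator bounded below, yields an expected total that shrinks polynomially in $n$; Markov's inequality together with the assumption $n \ge 3(M+\tau)^{3/\epsilon}\tau^{-1/\epsilon}$, which relates $cut$ to a power of $n^{-\epsilon}$, converts this into the desired $O(n^{3\epsilon-1})$ bound. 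Carefully tracking the two triangle types and the effect of the $\tau$-regularization on each denominator is the most delicate part of the argument.
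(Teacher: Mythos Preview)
Your handling of $E_1$ and $E_2$ is correct and matches the paper. The gap is in $E_3$. First, the damping factor you record is wrong: the weight that a triangle on $(i,k,\ell)$ contributes to $[(L_\tau L_\tau)\cdot L_\tau]_{i\ell}$ is $[(D_i+\tau)(D_k+\tau)(D_\ell+\tau)]^{-1}$, with a full $-1$ power on each vertex, not $-1/2$ on the endpoints (the $-1/2$ from $(L_\tau)_{i\ell}$ combines with another $-1/2$ coming out of $(L_\tau L_\tau)_{i\ell}$). With only a square root on $(D_\ell+\tau)$, a single hub $\ell\in S_*^c$ of degree $d_\ell^*$ of order $n$ already makes your first-moment sum of order $n^{-1/2}$, which is not $O(n^{3\epsilon-1})$ for small $\epsilon$. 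Second, your invocation of the hypothesis $n\ge 3(M+\tau)^{3/\epsilon}\tau^{-1/\epsilon}$ to ``relate $cut$ to a power of $n^{-\epsilon}$'' and feed Markov is a misconception: $cut=(M+\tau)^{-3}$ is a fixed constant independent of $n$, so Markov simply returns $\sum E[T_{i\ell}]/cut$ with no dependence on that hypothesis whatsoever. As written, your first-moment route does not reach $O(n^{3\epsilon-1})$, and your account of how $\epsilon$ enters is incorrect.

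The paper's separation argument is qualitatively different and explains where $\epsilon$ really enters. After conditioning on $A_{ij}=1$ (cost $\le d_j^*/n$) and lower-bounding denominators by $\tau$ and $d_j^*+\tau$, it applies a Bernstein-type concentration inequality to each of the two sums $\sum_{k\in S_*}A_{ik}A_{kj}$ and $\sum_{k\in S_*^c}A_{ik}A_{kj}/(d_k^*+\tau)$, obtaining for each both a crude polynomial bound and an exponential bound of the form $\exp(-c\,d_j^*)$ with $c=\tau\cdot cut/3$. It then \emph{splits on $d_j^*$}: if $d_j^*<n^\epsilon$ the polynomial bound yields $O(n^{3\epsilon-2})$ per pair; if $d_j^*\ge n^\epsilon$ the exponential bound gives $o(n^{-2})$, because the lower bound on $n$ forces $n^\epsilon>1/c$ so that $x\mapsto x e^{-cx}$ is already decreasing at $x=n^\epsilon$. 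It is this Bernstein-plus-threshold-at-$n^\epsilon$ argument, not a first-moment bound, that produces the $n^{3\epsilon-1}$ term and consumes the hypothesis on $n$.
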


A proof of Theorem \ref{Lclust} can be found in the Appendix.  

Because simple summary statistics (of sparsity and transitivity) on empirical networks contradict the types of models studied in the literature,  Theorem \ref{Lclust} tries to minimizes the assumptions on the ``global" structure of the graph.  It only assumes that the graph outside of $S_*$, i.e. the induced subgraph on $S_*^c$, is sparse.  There are no other assumptions on this part of the graph.  

This result is asymptotic in $n = |S_*^c|$, with  $S_*$ fixed and containing nodes with bounded expected degree;  the assumption in Equation \ref{lambdaBound} and the definition of $d_j^*$ imply that the nodes in $S_*$ have expected degree less than $s + \lambda$.

\subsection{Preliminary Data Analysis}
\begin{figure}[htbp] %  figure placement: here, top, bottom, or page
   \centering
   \includegraphics[width=5in]{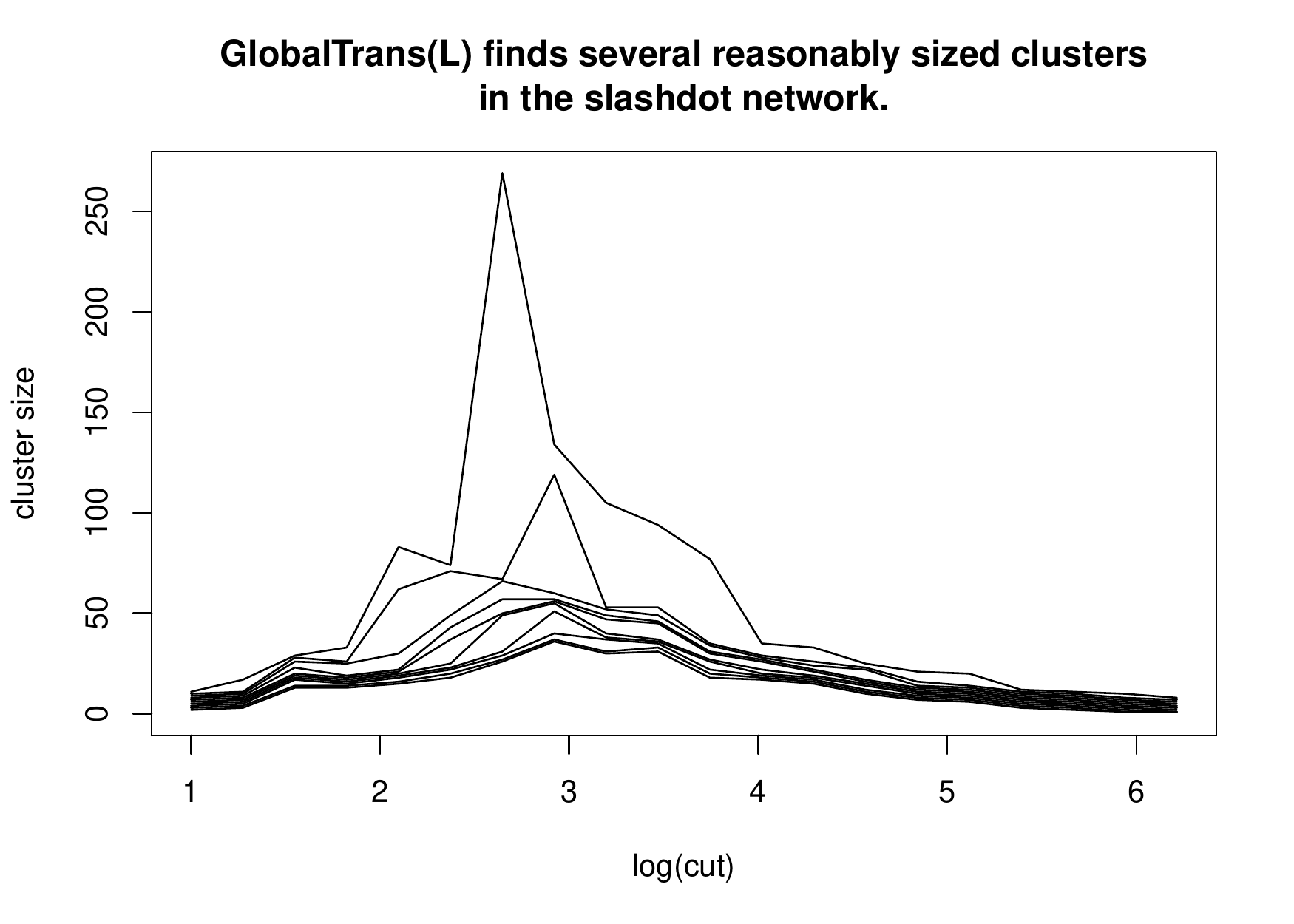}
   \caption{A plot of the number of nodes in the largest ten clusters (ignoring one very large cluster) found by \texttt{GlobalTrans}$(L_{\tau}, cut)$ in the slashdot social network.   \texttt{GlobalTrans}  with $L_\tau$ instead of $A$ finds much larger clusters. }
   \label{fig:L}
\end{figure}

\begin{figure}[htbp] %  figure placement: here, top, bottom, or page
   \centering
   \includegraphics[width=6in]{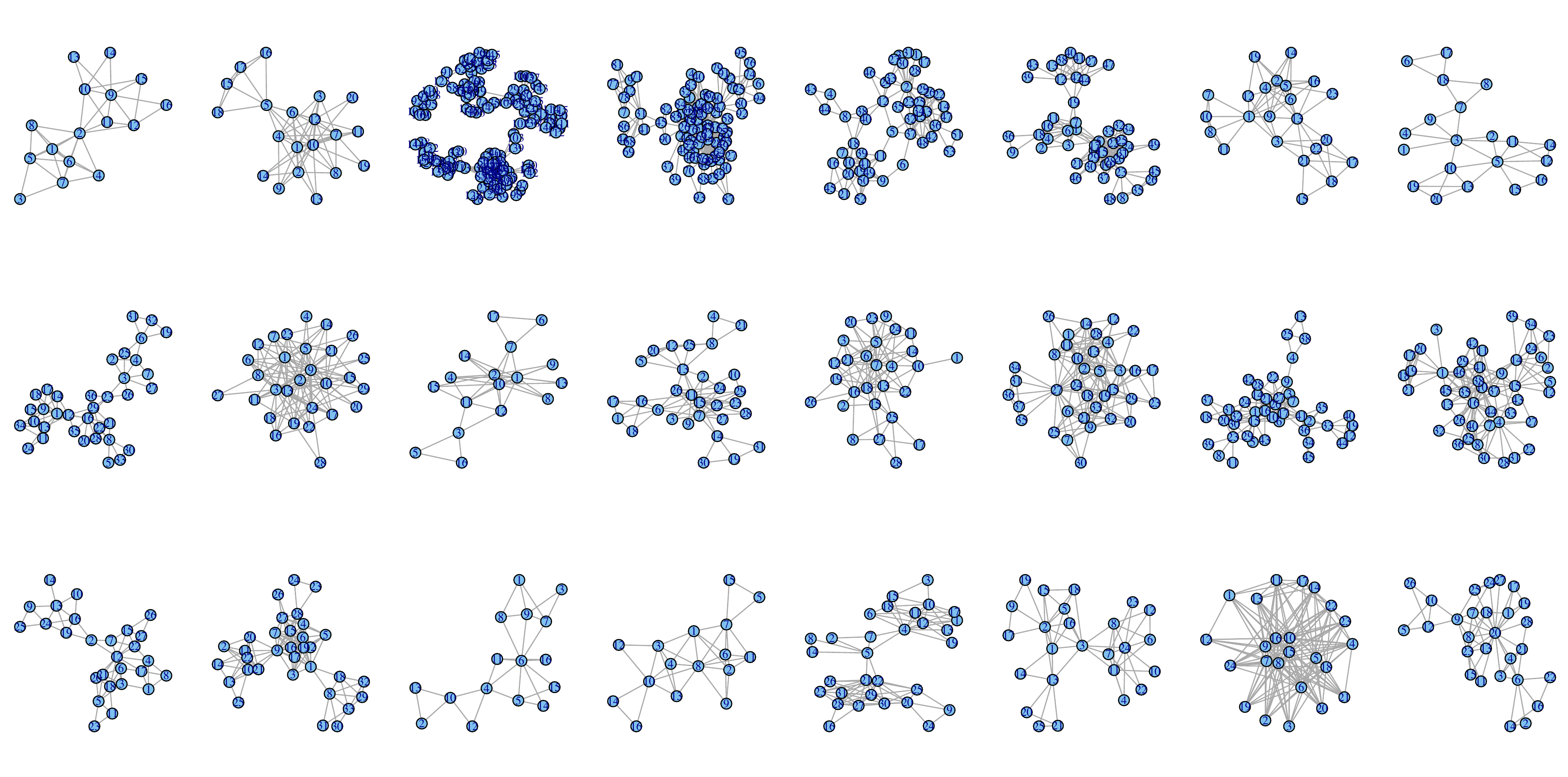} 
   \caption{   Twenty-four small clusters from the slashdot data set.   Because \texttt{GlobalTrans} discovers small clusters, one can easily plot and visualize the clusters with a standard graph visualization tool \citep{igraph}.  The point of this figure is to show the variability in cluster structures;  some are tight, clique-like clusters; others are small lattice-like clusters; others are ``stringy" collections of three or four tight clusters.  This highlights the ease of visualizing the results of local clustering.}
   \label{fig:SmallClusters}
\end{figure}

\begin{figure}[htbp] %  figure placement: here, top, bottom, or page
   \centering
   \includegraphics[width=6in]{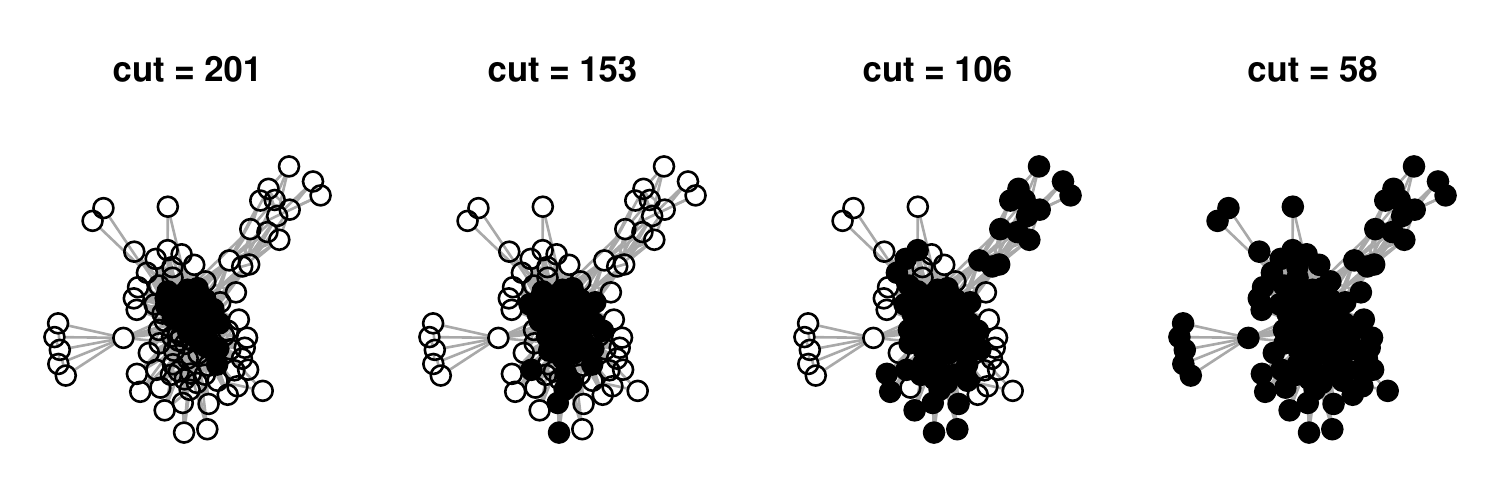}
   \caption{Starting from a seed node, this figure demonstrates how \texttt{LocalTrans}$(L_{\tau = 12},i,  cut)$ grows as $cut$ decreases.    In each panel, the graph is drawn for the smallest value of $cut$, and the solid nodes correspond to the nodes returned by \texttt{LocalTrans}$(L_{\tau = 12},i,  cut)$, where the value of $cut$ is given above the graph in the units $10^{-6}$. Moving from left to right, the clusters grow larger, and the additional nodes start to extend to the periphery of the visualization.    }
   \label{fig:Tree}
\end{figure}

Recall that Figure \ref{fig:A} illustrates how \texttt{GlobalTrans}$(A, cut)$ fails to find any clusters larger than twenty nodes in the slashdot social network. Figure \ref{fig:L} shows that using $L_\tau$ instead of $A$ corrects for the problems observed in Figure \ref{fig:A}. It plots the size of the largest ten clusters in the slashdot social network found by \texttt{GlobalTrans}$(L_{12}, cut)$ for values of $cut$ between $3*10^{-6}$ and $500*10^{-6}$.  It finds several clusters that exceed twenty nodes.  In this analysis, and all other analyses using $L_\tau$, the regularization constant $\tau$ is set equal to the average node degree (as suggested in \cite{chaudhuri2012spectral}).  In this case,  $\tau \approx 12$. 
% XXXX Tai, I'm sorry, I forgot again... does \cite{chaudhuri2012spectral} suggest using the average degree or was that your result?  If they don't say anything in that regard, we can say that it is from our unpublished work. They do suggest this choice XXX

Figure \ref{fig:SmallClusters} shows some of the clusters from the slashdot social network.  Specifically, it plots twenty-four of the induced subgraphs from \texttt{GlobalTrans}$(L_{\tau = 12}, cut = 32*10^{-6})$.   Because the clusters are not so large, the sub-graphs are easily visualized and it is easy to see how these clusters have several different structures.  Some are nearly planar; others appear as densely connected, ``clique-like" sub-graphs; other clusters are a collection of several smaller clusters, weakly strung together.   Figure \ref{fig:epinionClusters} in the introduction gives a similar plot for the epinions social network. These visualizations were created using the graph visualization tool in the igraph package in R \citep{igraph}. 

Figure \ref{fig:Tree} illustrates how \texttt{LocalTrans}$(L_{\tau = 12},i,  cut)$ changes as a function of $cut$ for a certain node in the epinions social network.    Each of the four panels displays the network for $cut = 58*10^{-6}$.  In each of the four panels, solid nodes are the nodes that are included in \texttt{LocalTrans}$(L_{\tau = 12},i,  cut)$ for four different values of $cut$.  This seed node was selected because the local cluster is slowly growing as $cut$ decreases and you can see in this in Figure \ref{fig:Tree}.\footnote{In particular, it was chosen as the ``slowest growing" from a randomly chosen set of 200 nodes.}  The left most panel displays the results for the largest value of $cut$.  This returns the smallest cluster and not surprisingly, the igraph package plots these nodes in the center of the larger graph.  Moving to the right, the clusters grow larger and the additional nodes start to extend to the periphery of the visualization.  While the clusters for this node grow slowly, for many other nodes, the transitions are abrupt.  For example, the nodes that join the cluster in the last panel in Figure \ref{fig:Tree} jump from cluster sizes of one or two into this bigger cluster.  Then, decreasing $cut$ a little bit more, this cluster becomes part of a giant component.

\section{Discussion}
The tension between transitivity and sparsity in networks that implies that there are local regions of the graph that are dense and transitive.  This leads to the blessing of dimensionality, which says that edges (in sparse and transitive graphs) become asymptotically more informative.  For example, under the exchangeable model, if the model is sparse and transitive, then the conditional density of the latent variables $\xi_i, \xi_j$, given $A_{ij} = 1$, is asymptotically unbounded, concentrating on the values of   $\xi_i, \xi_j$ that are consistent with the local structure in the model.
This has important implications for statistical models, methods, and estimation theory.  

%\ref{clustering} demonstrates how  
% The interaction between transitivity and sparsity has important implications for statistical models, theory, and methodology.  
% The previous literature on network inference has not studied models that are both sparse and transitive.   We have addressed this gap in the literature by proposing and studying the local (degree corrected) Stochastic Blockmodel.
%%In order to produce sparse models, previous research using the exchangeable random graph model (and the Stochastic Blockmodel) has often studied an asymptotic setting where $p_{\max}$ converges to zero.  Unfortunately, this has the bi-product of removing transitivity.  Interestingly, physics literature on networks has also struggled to model transitivity in sparse networks \citep{newman2009random}.  
%To create such a model, it is necessary for the model to retain some local structure.   For example, Theorem \ref{sbmtrans} shows that a Stochastic Blockmodel with fixed block size can be both asymptotically sparse and asymptotically transitive.  
%%In this model, the number of blocks grows proportionally to $n$.  So, each block is very small. In essence, this creates local structure.  

In sparse and non-transitive Stochastic Blockmodels, the block structure is \textit{not} revealed in the local structure of the network.  Rather, the blocks are revealed by comparing the edge density of various partitions. However, under transitive models, the local structure of the network can reveal the block structure.  As such, these blocks can be estimated by fast local algorithms.  
% in the low dim sbm are not local features of the net.  by making $K$ grow proportionally with $N$, the hdsbm can model sparse and trans networks.  the trans makes the blocks a local feature, allowing fast local algorithms to discover the structure. in previous research, lesk found that the local alg often return clusters of higher conductance than those found by other techniques.  However, in their words, local alg "found reg clusters, lower diam" 
%
%
%transitive clusters are a Local feature of the network.  As such, ``global" algorithms 
Theorems \ref{Aclust} and \ref{Lclust} show that \texttt{LocalTrans}  performs well under a \textit{local} Stochastic Blockmodel that makes minimal assumptions on the nodes outside of the true cluster; this is the first statistical result to demonstrate how local clustering algorithms can be robust to vast regions of the graph.
This paper studies small clusters because  (1) they  can create sparse \textit{and} transitive Stochastic Blockmodels, (2) they are relatively easy to find, both computationally and statistically, and (3) they are easy to plot and visualize. 
In future research, we will study how these ideas can be used to find large partitions in networks.  Sparse and transitive models do not preclude large partitions, as long as  some type of local structure exists within each partition.  It is not yet clear how global algorithms like spectral clustering might leverage this transitive structure in a stochastic model; this is one area for future research.

%\section*{Acknowledgements}

\bibliographystyle{plainnat}
\bibliography{citeTrans.bib}

\begin{appendix}
\section{Proofs for Section \ref{sec:trans}}

\subsection{Proof of Theorem \ref{sampleTrans}}

\begin{proof}
Recall that the transitivity ratio of $A$ is
\[trans(A) = \frac{ \mbox{number of closed triplets in $A$}}{\mbox{number of connected triples of vertices in $A$}}.\]
Both the numerator and the denominator of the transitivity ratio have other formulations that suggest how they can be  computed.  
%For example, let $\lambda_1 \ge \dots \ge \lambda_n$ denote the eigenvalues of $A$, then
\begin{eqnarray*}
\mbox{number of closed triplets in $A$} &=&6 \times \mbox{Number of triangles in $A$} \\&=&  \mbox{trace}(AAA) \\ % = \tfrac{1}{6} \sum_j \lambda_j^3\\
\\
\mbox{number of connected triples of vertices in $A$} &=&2 \times \mbox{Number of 2-stars in $A$}\\&=& 2 \sum_j {d_j \choose 2} \\& =& \sum_j d_j^2 - d_j.
\end{eqnarray*}

For ease of notation, define $X_n = trace(AAA)$ and $Y_n = \sum_i d_i^2  - d_i$.  So, $trans(A) = X_n/  Y_n$.  To show that transitivity converges to zero, use
\[P\left(\frac{X_n}{Y_n} > \epsilon\right) \le \frac{E(X_n/Y_n)}{\epsilon}\]
and the following Lemma.

\begin{lemma} \label{yflemma}
\textit{If $\lambda_n = o(n)$, then there exists a sequence $f_n$ such that $E(X_n) = o(f_n)$ and}
\[P(Y_n \ge f_n) \rightarrow 1.\]
\end{lemma}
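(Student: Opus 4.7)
The plan is to produce $f_n$ as a geometric mean between a tight upper bound on $E(X_n)$ and a tight high-probability lower bound on $Y_n$. I expect to show $E(X_n)=o(n\lambda_n^2)$ and $Y_n\ge c\,n\lambda_n^2$ in probability for some $c>0$; then $f_n := \sqrt{E(X_n)\cdot n\lambda_n^2}$ clearly satisfies $E(X_n)=o(f_n)$ while $f_n=o(n\lambda_n^2)$, so $P(Y_n\ge f_n)\to 1$.

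For the upper bound on $E(X_n)=\mathrm{tr}(A^3)$, I expand as a sum over ordered triples $(i,j,k)$, noting that only distinct-index triples contribute (so the sum has $\Theta(n^3)$ terms, each equal to $P(A_{12}=A_{13}=A_{23}=1)$ by exchangeability). Conditional independence of $A_{23}$ from $(A_{12},A_{13})$ given the $\xi_i$'s, combined with $P(A_{23}=1\mid\xi_2,\xi_3)\le p_{\max}$, gives
\[
P(A_{12}=A_{13}=A_{23}=1)\le P(A_{12}=1)\,P(A_{13}=1\mid A_{12}=1)\,p_{\max}.
\]
Hypothesis \eqref{pmaxfrac} now forces this to be $o(P(A_{12}=1)^2) = o(\lambda_n^2/n^2)$, so $E(X_n)=o(n\lambda_n^2)$.

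For the lower bound on $Y_n=\sum_i d_i(d_i-1)$, I apply Cauchy--Schwarz, $\sum d_i^2\ge(\sum d_i)^2/n = 4|E|^2/n$, reducing the task to concentration of $|E|$. The key structural observation is that if $\{i,j\}$ and $\{k,l\}$ are disjoint then $A_{ij}$ and $A_{kl}$ are exactly independent under the exchangeable model (the latent $\xi$'s are i.i.d.), so only the $O(n^3)$ vertex-sharing pairs contribute to $\mathrm{Var}(|E|)$. Bounding $\mathrm{Cov}(A_{12},A_{13})\le P(A_{13}=1\mid A_{12}=1)\,P(A_{12}=1)\le p_{\max}\lambda_n/n$ and using $p_{\max}\le 1$ with $\lambda_n\to\infty$, I obtain
\[
\frac{\mathrm{Var}(|E|)}{(E|E|)^2} = O\!\Big(\frac{1}{n\lambda_n}\Big)+O\!\Big(\frac{p_{\max}}{\lambda_n}\Big)\longrightarrow 0.
\]
Chebyshev then yields $|E|\ge(1-\delta)n\lambda_n/2$ with probability tending to $1$, and so $Y_n\ge(1-\delta)^2 n\lambda_n^2 - 2|E| \sim n\lambda_n^2$ with high probability (the linear subtraction is negligible since $\lambda_n\to\infty$).

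The main obstacle is this lower-bound step: unlike the one-sided Markov argument that dispatches $E(X_n)$, controlling $Y_n$ requires a second-moment argument in the exchangeable (rather than independent-edge) regime. The structural fact that rescues it is the exact vanishing of covariances for disjoint edge pairs, an immediate consequence of the i.i.d.\ latent variables. This cuts the variance contribution from the potentially $\Theta(n^4)$ disjoint-pair sum down to the $O(n^3)$ sharing-pair sum, which is what ultimately makes $\mathrm{Var}(|E|)/(E|E|)^2$ vanish.
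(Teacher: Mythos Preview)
Your argument is correct and follows the same skeleton as the paper's: bound $E(X_n)=o(n\lambda_n^2)$ via the $p_{\max}$ hypothesis, show $Y_n\gtrsim n\lambda_n^2$ in probability by Cauchy--Schwarz applied to $\sum_i d_i^2$ together with concentration of the total edge count, and then pick $f_n$ in between. The differences are cosmetic. The paper takes $f_n=M_n^2/n-M_n$ with $M_n=n(n-1)\rho_n$ (so $f_n\sim n\lambda_n^2$ explicitly) rather than your geometric mean, and it obtains concentration of $\sum_i d_i$ by citing the result $\hat\rho/\rho_n\stackrel{P}{\to}1$ from Bickel--Chen--Levina rather than doing a second-moment computation. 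Your self-contained variance argument, exploiting that $A_{ij}$ and $A_{kl}$ are unconditionally independent when $\{i,j\}\cap\{k,l\}=\emptyset$, is a nice replacement for the citation and makes the proof stand alone; the paper's route is slightly shorter if one is willing to import that result.
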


Using Lemma \ref{yflemma} and fact that $X_n/Y_n \le 1$ a.s.,
\begin{eqnarray*}
E\frac{X_n}{Y_n} & \le &  E\left(\frac{X_n}{f_n}1\{Y_n > f_n\} + 1\{Y_n < f_n\}\right)\\
& \le &  \frac{E(X_n)}{f_n} + P(Y_n < f_n)\\
& \rightarrow & 0.
\end{eqnarray*}
Now, to prove Lemma \ref{yflemma}.  For ease of notation, define $d = \sum_i d_i$.  From Bickel, Levina, Chen, define $\hat \rho = \frac{d}{2n(n-1)}$. They show that $\hat \rho/\rho_n  \stackrel{P}{\rightarrow}  1$, where $\rho_n = P(A_{12} = 1)$.  So, this converges to zero:
%\begin{eqnarray*}
\[P\left(\frac{d}{2n(n-1)\rho_n} < 1/2 \right) =P\left(d < n(n-1)\rho_n\right)\]
%\end{eqnarray*} 
Define $M_n = n(n-1)\rho_n$. Then, $P\left(    d > M_n \right) \rightarrow 1$.  Define $f_n = M_n^2/n - M_n$. 
Notice that 
\[min_{\sum_i d_i = m} \sum_i d_i^2 - d_i \ge n ((m/n)^2 - m/n) = m^2/n - m.\]
Putting these pieces together,
\begin{eqnarray*}
P(Y_n \ge f_n) &=& \int_\Omega 1\{Y_n \ge f_n\} dP\\
&\ge& \int_{d >M_n} 1\{\sum_i d_i^2 - d \ge f_n\} dP\\
&=& \sum_{m > M_n} \int_{d=m} 1\{\sum_i d_i^2 - d \ge f_n\} dP\\
&\ge& \sum_{m > M_n} \int_{d =m} 1\{m^2/n - m \ge f_n\} dP\\
&=& \sum_{m > M_n} \int_{d =m}  dP\\
&=&P(d > M_n) \rightarrow 1.
%&=& \int_{\sum_{i<j}A_{ij} >M} 1\{\sum_i D_i^2 >t + \sum_i D_i \} dP\\
% & \le &  E\left(\frac{X_n}{f_n}1\{Y_n > f_n\} + 1\{Y_n < f_n\}\right)\\
%& \le &  \frac{p_{ct} E(Y_n)}{f_n} + P(Y_n < f_n).
\end{eqnarray*}

The last piece is to show that $E(X_n) = o(f_n)$.  From the definition of $f_n$ and the fact that $\rho_n = \lambda_n/n$,
\begin{eqnarray*}
f_n &=& \frac{\left( n(n-1)\rho_n\right)^2}{n} -  n(n-1)\rho_n\\ 
&=& n(n-1)^2(\lambda_n/n)^2 -  n(n-1)\lambda_n/n\\ 
&=&  \lambda_n (n-1)\left(\frac{n-1}{n} \lambda_n - 1 \right)\\
&\rightarrow&  \lambda_n^2 n%\\
%&=&  O( \rho_n^2 n^3).
\end{eqnarray*}

Define 
\[S_{12} = \sum_i A_{i1} A_{2i} \]
as the number of two stars with nodes 1 and 2 as end points. Then, under the assumption that
\begin{equation*} \label{pbound}
p_{\max} = o\left(\frac{P(A_{13} = 1)} {P(A_{13} = 1|A_{23} = 1)}\right),
\end{equation*}
it follows that 
\begin{eqnarray*}
E(X_n) &=& p_{ct} n(n-1) E(S_{12}) \\
&\le& p_{\max} n^2 (n E(A_{13}A_{23})) \\
&=& p_{\max} n^3 E(A_{13}|A_{23} = 1) E(A_{23} = 1) \\
&=& o(\rho_n^2 n^3) \\
&=& o(f_n).
\end{eqnarray*}

%Where the last line holds when $p_{\max} = o(\rho_n^{2/3})$.

%Alternatively,
%\begin{eqnarray*}
%E(X_n) &\le& p_{\max} n^3 E(A_{13}A_{23}) \\
%&\le& p_{\max} n^3 E(A_{13}|A_{23} = 1) \rho_n \\
%&\le& p_{\max} n^3 p_{\max} \rho_n \\
%&=& o(\rho_n^2 n^3),
%\end{eqnarray*}
%when $p_{\max} = o(\rho^{1/2})$.  I think this is a nice assumption.

%Another thought...
%\begin{eqnarray*}
%E(X_n) &\le& p_{\max} n^3 E(A_{13}|A_{23} = 1) \rho_n \\
%&=& o(\rho_n^2 n^3) = o(f_n),
%\end{eqnarray*}
%when 
%\begin{equation} \label{pbound}
%p_{\max} = o\left(\frac{P(A_{13}} {E(A_{13}|A_{23} = 1)}\right).
%\end{equation}
\end{proof}

\subsection{Proof of Theorem~\ref {sbmtrans}:} 
\begin{proof}
Let $r = c_0/n$ and let $p$ be fixed. 

{\bf Number of triangles:} Let $\Delta_n$ denote the number of triangles. Notice that there are three types of triangles: (1) let $\Delta^i$ denote the number of triangles with all nodes in block $i$; (2) let $\Delta_{21}$ denote the number of triangles with 2 nodes in the same block and one node in a separate block; (3) let $\Delta_{111}$ denote the number of triangles with nodes in three separate blocks.  
\[E(\Delta_{21}) = K(K-1){s \choose 2}spr^2  = K(K-1){s \choose 2}spc_0^2 / (s^2K^2) \le p(s-1)/2,\]
\[E(\Delta_{111}) = {K \choose 3} s^3 r^3 \le K^3 s^3 c_0^3 / (6 s^3 K^3) = c_0^3 / 6.\]
By the Markov inequality, $\Delta_{21}/K \stackrel{P}{\rightarrow} 0, \Delta_{111}/K \stackrel{P}{\rightarrow} 0$. Finally, $\Delta^i$ are iid.   So, by LLN, their average converges in probability to their expectation.  Putting these pieces together with Slutsky's theorem,  the number of triangles over $K$ is,
\[\frac{1}{K}\Delta_n \stackrel{P}{\rightarrow} E(\Delta^i) = {s \choose 3}p^3.\]
{\bf Number of two stars:} Let $S_n$ be the number of two-stars. Define the events $B = \{|S_n - E(S_n)| > t\}$ and $A = \{\mbox{Maximum Degree} \le M\}$
\[P(B) = P(B A) + P(B A^c) \le P(B A) + P(A^c)\]
Apply the bounded difference inequality within the set $BA$. Define $A_i \in \{0,1\}^{n-i}$ for $i = 1, \dots, n-1$ as the $i$th row of the upper triangle of the adjacency matrix $A$. To bound the bounded difference constant, first notice that $c_1\ge c_i$ for all $i$.  Moreover, we have
\[c_1 \le 3 M^2.\]
This is because node $1$ belongs to at most $3{M \choose 2}$ triplets. By changing the edges of node $1$, $S_n$ can increase or decrease by at most $3{M \choose 2}$.
By the bounded difference inequality, 
\[P(B A) \le 2 \exp\left(-\frac{t^2}{n9M^4}\right).\]
Choose $M = \log n$ and $t = K \epsilon$ for any $\epsilon > 0$, we have $P(B A) \rightarrow 0$.
More over, by concentration inequality, 
\begin{align*}
P(A^c) = P(\cup_i\{d_i \ge M\}) \le n P(d_1 \ge M) \rightarrow 0
\end{align*}
Therefore, we have 
 \[S_n/K \stackrel{P}{\rightarrow} E(S_n)/K.\]
Notice that $E(S_n)/K$ is equal to the expected number of two-stars whose center is in the first block.  So,
\begin{eqnarray*}
E(S_n)/K &=& s\left( {s-1 \choose 2} p^2 + (s-1)(n-s)pr + { n - s \choose 2 } r^2\right)\\
& \rightarrow& s\left( {s-1 \choose 2} p^2 + (s-1)pc_0 + c_0^2/2\right) 
\end{eqnarray*}
Finally, 
\begin{align*}
TranRatio(A) = \frac{3 \times \mbox{number of triangles}}{\mbox{number of 2-stars}} \stackrel{P}\longrightarrow \frac{3E\Delta_n}{ES_n}.
\end{align*}

\end{proof}

\section{Proofs for Section \ref{clustering} and \ref{degClustering}}

\subsection{Proof of Theorem~\ref{Aclust}:} 

\begin{proof} Define the following events 
%XXX Tai, we should reserve $i$ for an index over nodes. What other symbol could we use?  Nothing pops into my head immediately... $i,j,k,\ell$ are taken. $\tau$ is taken.  $c$ would be cute, but we don't want it to be confused with complement... Maybe one of these:  $\kappa, \delta,\gamma,\alpha$??
\begin{eqnarray*}
B_\alpha &=& \{\mbox{$S_*$ and $S_*^c$ are separated with cutting level $\alpha$}\},\\
C_\alpha &=& \{\mbox{$S_*$ is clustered within one block with cutting level $\alpha$}\},\\
D_\alpha &=& \{\mbox{Every pair of nodes in $S_*$ have at least $\alpha$ common neighbor}.\}
\end{eqnarray*}
If both events $B_\alpha$ and $C_\alpha$ are satisfied, then for any $i \in S_*$, LocalTrans$(A, i, \alpha)$ recovers block $S_*$ correctly.
Events $D_\alpha$ implies that $S_*$ is clustered within one block with cutting level $\alpha$, that is $D_\alpha \in C_\alpha$. To see this, assume the contrary, then there exists a partition $S_* = S_1 \cup S_2$, such that for any $j \in S_1, k \in S_2, T(j,k) < \alpha$. However, $D_\alpha$ implies that $S_*$ is connected, hence there exists $u \in S_1, v \in S_2$, such that $A(u,v) = 1$. Moreover, $D_\alpha$ also implies that $u,v$ have at least $\alpha$ common neighbors. Hence, $T(u,v) \ge \alpha$. This is a contradiction.

The following lemma leads to the desired results.
\begin{lemma} \label{lem1} Under the conditions above,
\begin{eqnarray*}
P(B_1^c)  &=& O(p_{out}^2ns(s + \lambda)),\\
P(B_2^c)  &=& O(p_{out}^3ns(s+\lambda)^2),\\
P(D_\alpha^c) &\le& \frac{1}{2}s^2\sum_{k = 0}^{\alpha-1} {s-2 \choose k} {(1 - p_{in}^2)}^{s-2-k}.
\end{eqnarray*}
\end{lemma}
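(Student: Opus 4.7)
The plan is to handle the three inequalities with separate union-bound/first-moment arguments, all of which exploit the full independence in condition 3 of Definition \ref{localsbm} to decouple the edges incident to $S_*$ from the remainder of $A$.

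For the bound on $P(D_\alpha^c)$, I would union bound over the $\binom{s}{2}\le s^2/2$ unordered pairs $\{i,j\}\subset S_*$. For a fixed pair, the number of common neighbors inside $S_*$ is $N_{ij}=\sum_{k\in S_*\setminus\{i,j\}}A_{ik}A_{jk}$, a sum of $s-2$ mutually independent Bernoulli indicators each with success probability at least $p_{in}^2$. So $N_{ij}$ stochastically dominates a $\mathrm{Bin}(s-2,p_{in}^2)$ variable, a binomial tail bound applies, and the crude estimate $(p_{in}^2)^k\le 1$ inside the tail gives
\[ P(N_{ij}<\alpha) \le \sum_{k=0}^{\alpha-1}\binom{s-2}{k}(1-p_{in}^2)^{s-2-k}, \]
and multiplying by $\binom{s}{2}$ closes the argument.

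For $P(B_\alpha^c)$, I would apply Markov's inequality to the count of boundary pairs $(i,\ell)\in S_*\times S_*^c$ with $A_{i\ell}=1$ and $\sum_k A_{ik}A_{k\ell}\ge\alpha$. The geometric point illustrated in Figure \ref{fig:2tri} is that every triangle straddling the boundary of $S_*$ uses at least two cross-boundary edges, so by condition 3 each such triangle's expected contribution carries a factor $p_{out}^2$. For $\alpha=1$ I would split on the side of the partition containing the third vertex $k$. The case $k\in S_*$ contributes $O(s^2 n p_{out}^2)$ by counting the $\binom{s}{2}\cdot n$ candidate triples, each with two independent cross edges. The case $k\in S_*^c$ contributes $O(sn\lambda p_{out}^2)$: the cross edges $A_{ik},A_{i\ell}$ provide the $p_{out}^2$ factor while the edge $A_{k\ell}$ inside $S_*^c$ is summed deterministically against the hypothesis $\sum_{k,\ell\in S_*^c}A_{k\ell}\le n\lambda$. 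Summing the two cases reproduces the first displayed inequality.

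The $\alpha=2$ case iterates the same strategy. Requiring $T_{i\ell}\ge 2$ forces the presence of two distinct auxiliary vertices $k_1,k_2$ each completing a triangle on the shared cross base $(i,\ell)$, so the configuration uses at least three cross-boundary edges. Case-splitting according to how many of $k_1,k_2$ lie in $S_*$ produces three subcases which, after counting configurations and grouping with the factor $p_{out}^3$, combine to $O(p_{out}^3 ns(s+\lambda)^2)$. The main obstacle is the subcase in which both auxiliary vertices lie in $S_*^c$: one must carefully count ordered 2-paths in the (possibly adversarial) subgraph induced on $S_*^c$ from only the edge-count hypothesis, and then cleanly separate that deterministic count from the three independent cross-edge probabilities using condition 3. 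Once this decoupling is in place, the remaining computation is a routine aggregation of expected counts that matches the claimed polynomial in $s$ and $\lambda$.
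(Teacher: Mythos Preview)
Your proposal follows essentially the same route as the paper: union bounds over boundary pairs $(i,\ell)\in S_*\times S_*^c$ and over auxiliary vertices, a case split according to which side of $S_*$ each auxiliary vertex lies on, and factoring the cross-boundary edge probabilities out of the deterministic edge (or 2-path) count inside $S_*^c$ via the independence in condition~3. The paper handles the subcase you flag (both auxiliaries in $S_*^c$) by writing each internal-edge contribution as $\lambda/n$, which is exactly the first-moment bookkeeping you outline.
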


By Lemma~\ref{lem1}, we have
\begin{eqnarray*}
P(\mbox{correctly clustering $S_*$ with cutting level 1}) &\ge & P(B_1 \cap C_1) \\
&=& 1 - P(B_1^c \cup C_1^c) \\
&\ge& 1- P(B_1^c) + P(C_1^c) \\
&\ge& 1- P(B_1^c) + P(D_1^c) \\
&=& 1 - \left(\frac{1}{2}s^2(1 - p_{in}^2)^{s - 2}     + O(p_{out}^2ns(s + \lambda))\right)
\end{eqnarray*}

\begin{eqnarray*}
P(\mbox{correctly clustering $S_*$ with cutting level 2}) &\ge & P(B_2 \cap C_2) \\
&=& 1 - P(B_2^c \cup C_2^c) \\
&\ge& 1- P(B_2^c) + P(C_2^c) \\
&\ge& 1- P(B_2^c) + P(D_2^c) \\
&=& 1 - \left(s^3(1 - p_{in}^2)^{s - 3}     + O(p_{out}^3ns(s+\lambda)^2)\right)
\end{eqnarray*}
\end{proof}

\textbf{Proof of Lemma~\ref{lem1}:} 
\begin{proof}
\begin{eqnarray*}
P(B_1^c) &=& P(\mbox{There exists at least two nodes $i \in S_*$ and $j \in S_*^c$, such that $T_{ij} \geq 1$})\\
&=&P(\bigcup_{i \in S_*}\bigcup_{j \in S_*^c} \bigcup_{k \in S_* \cup S_*^c, k \neq i,j} \{A_{ij}A_{jk}A_{ki}=1\})\\
&\le&sn[P(\bigcup_{k\in S_*} \{A_{ij}A_{jk}A_{ki}=1\}) + P(\bigcup_{k\in S_*^c} \{A_{ij}A_{jk}A_{ki}=1\})]\\
&\le&sn(sp_{out}^2 + np_{out}^2\frac{\lambda}{n})\\
&=& O(p_{out}^2ns(s + \lambda))
\end{eqnarray*}

\begin{eqnarray*}
P(B_2^c) &=& P(\mbox{There exists at least two nodes $i \in S_*$ and $j \in S_*^c$, such that $T_{ij} \geq 2$})\\
&=&P(\bigcup_{i \in S_*}\bigcup_{j \in S_*^c} \bigcup_{k<l \in S_* \cup S_*^c, k,l \neq i,j} \{A_{ij}A_{jk}A_{ki}A_{jl}A_{li} = 1\})\\
&\le& snP(\bigcup_{k,l\in S_*} \cup \bigcup_{k\in S_*,l\in S_*^c} \cup \bigcup_{k,l\in S_*^c} \{A_{ij}A_{jk}A_{ki}A_{jl}A_{li} = 1\})\\
&\le& sn(\frac{1}{2}s^2p_{out}^3 + \frac{1}{2}n^2p_{out}^3{(\frac{\lambda}{n})}^2 + nsp_{out}^3\frac{\lambda}{n})\\
&=& O(p_{out}^3ns(s+\lambda)^2)
\end{eqnarray*}

\begin{eqnarray*}
P(D_\lambda^c) &=& P(\mbox{There exists at least two nodes $i, j\in S_*$, such that $i$ and $j$ has less than $\alpha$ neighbors})\\
&=&P\left(\bigcup_{i,j \in S_*} \{\mbox{$i$ and $j$ has less than $\alpha$ neighbors}\}\right)\\
&\leq& \frac{1}{2}s^2\sum_{k = 0}^{\alpha-1} {s-2 \choose k} {(1 - p_{in}^2)}^{s-2-k}
\end{eqnarray*}
\end{proof}

\textbf{Proof of Theorem~\ref{Lclust}:}
\begin{proof}
In the proof, we assume that $i,j \in S_*$, $P(A_{ij} =  1) = p_{in}$. The proof can be easily extended to the case where $P(A_{ij} =  1) \ge p_{in}$.
First, we prove that $P( \max_{i \in S_*} D_{ii}\ge 2 \mathbb{E}D_{11})$ is well bounded with some non-vanishing probability.
$\forall i \in S_*$, 
\[ \mathbb{E} D_{ii} =  \mathbb{E} D_{11} = \sum_{j \in S_*, j \neq i}  \mathbb{E} A_{ij} + \sum_{j \in S_*^c}  \mathbb{E} A_{ij} = (s-1)p_{in} + \lambda.\] 
$\forall i \in S_*,\ \forall \epsilon >0$, 
\[P(D_{ii} \ge  \mathbb{E} D_{ii} + \epsilon) \le \exp\left\{-\frac{\epsilon}{2( \mathbb{E} D_{ii} + \epsilon/3)}\right\}.\]
Take $\epsilon =  \mathbb{E} D_{11}$, and take union bound for all $i \in S_*$, we have
\[P(\max_{i \in S_*}D_{ii} \ge 2 \mathbb{E} D_{11}) \le s \exp\left\{- \frac{3}{8}  \mathbb{E} D_{11}\right\} = s \exp\{- \frac{3}{8}[(s-1)p_{in} + \lambda]\}.\]
Let $O$ denote the set $\{D_{ii} \le 2 \mathbb{E} D_{11}, \forall i \in S_*\}$. Then within the set $O$, by the same argument from the proof of Theorem 3, we have that with probability at least $\frac{1}{2}s^2(1 - p_{in}^2)^{s-2}$, $S_*$ is clustered within one block by LocalTrans$(L_\tau, i , cut )$ for any $i \in S_*$ with $cut$ 
\[cut=  \left(2 \mathbb{E} D_{11} + \tau\right)^{-3} = \left(2(s-1)p_{in} + 2\lambda+\tau\right)^{-3}.\]
Second part proves that $\forall i \in S_*, j \in S_*^c, P(T_{ij} \ge cut)$ is $o(1)$. 
Notice that for any $j \in S_*^c$, the $(j,j)$th element of $D_\tau$ (denote as $D^\tau_{jj}$) is $d_j^* + \sum_{i \in S_*}A_{ij} + \tau$, so we have
\[D^\tau_{jj} \ge d_j^* + \tau, \quad \forall j \in S_*^c \]
For any $i \in S_*$, we have
\[D^\tau_{ii} \ge \tau, \quad \forall i \in S_*\]
$\forall i \in S_*,\ j \in S_*^c$, 
\begin{eqnarray*}
P(T_{ij} \ge cut) &\le &  \frac{d_j^*}{n}P(T_{ij} \ge\ cut | A_{ij} = 1)\\
&=& \frac{d_j^*}{n}P\left( \frac{1}{D^\tau_{ii}D^\tau_{jj}} \sum_{k = 1}^n \frac{A_{ik}A_{kj}}{D^\tau_{kk}}\ge cut\right) \\
&\le& \frac{d_j^*}{n}P\left( \frac{1}{\tau(d_j^*+\tau)} \sum_{k = 1}^n \frac{A_{ik}A_{kj}}{D^\tau_{kk}}\ge cut\right) \\
&\le&  \frac{d_j^*}{n} \left[P\left( \sum_{k \in S_*} A_{ik}A_{kj}\ge \tau^2(d_j^* + \tau)cut/2\right) + \right. \\
&& \hspace{.5in} \left. P\left( \sum_{k \in S_*^c} \frac{A_{ik}A_{kj}}{d_k + \tau}\ge \tau(d_j^* + \tau)cut/2\right)\right] \\
\end{eqnarray*}

For the first term, when $n$ large, 
\begin{eqnarray*}
P( \sum_{k \in S_*} A_{ik}A_{kj} \ge \tau^2(d_j^* + \tau)cut/2) &\le&  P( \sum_{k \in S_*, k \neq i} A_{ik}A_{kj} > 0) \\
&\le& 1 -  (1 - p_{in}\frac{d_j^*}{n})^{s-1} \\
&\le& p_{in}(s-1)\frac{d_j^*}{n}
\end{eqnarray*}
On the other hand, notice that $\{A_{ik}A_{kj}, k \in S_*/\{i\} \}$ are independent random variables with $A_{ik}A_{kj} \sim Ber(p_{in}\frac{d_j^*}{n})$ by the assumption. $E[\sum_{k \in S_*, k \neq i} A_{ik}A_{kj}] = p_{in}(s-1)\frac{d_j^*}{n}$. By concentration inequality, when $n$ is sufficiently large (independent of $j$), we have 
\begin{eqnarray*}
P ( \sum_{k \in S_*} A_{ik}A_{kj}\ge \tau^2(d_j^* + \tau)cut/2 )& \le &  P\left( \sum_{k \in S_*} A_{ik}A_{kj}\ge 
p_{in}(s-1)\frac{d_j^*}{n}  + \right. \\
&&\hspace{.5in} \left. d_j^*\left(\tau^2cut/2 -\frac{p_{in}(s-1)}{n}\right) \right) \\
&\le& \exp\left(-\frac{c_1^2(d_j^*)^2}{2c_2d_j^*/n + 2c_1d_j^*/3}\right)\\
&=& \exp(-c_1d_j^*)
\end{eqnarray*}
where $c_1 = \tau^2cut/3$.

For the second term, without loss of generality, assume that $A_{1j} = A_{2j} = ... = A_{d_j^*,j} = 1, A_{k,j} = 0, \forall k > d_j^*$.
Notice that  $\{A_{ik}, k = 1,2,..., n,k \neq i \}$ are independent random variables with $A_{ik} \sim Ber(\frac{d_k^*}{n})$. Applying concentration inequality on the sequence $\{A_{ik}, k = 1,2,..., d_j^*\}$, with $a_k = \frac{1}{d^*_k}$. Define $X = \sum_{k = 1}^{d_j^*} a_k A_{ik}$, then $ \mathbb{E} X = \frac{d_j^*}{n}$, and 
\[v = \sum_{k = 1}^{d_j^*} a_k^2  \mathbb{E} A_{ik} =  \frac{1}{n}\sum_{k = 1}^{d_j^*} \frac{1}{d_j^*}\le \frac{d_j^*}{n}.\]
When $n$ is sufficiently large, we have,
\begin{eqnarray*}
P\left( \sum_{k \in S_*^c} \frac{A_{ik}A_{kj}}{d_k + \tau}\ge \tau(d_j^* + \tau)cut/2\right) &\le& P\left( \sum_{k =1}^{d_j^*} \frac{A_{ik}}{d_k}\ge\tau(d_j^* + \tau)cut/2\right)\\
  &\le& P\bigg( \sum_{k =1}^{d_j^*} \frac{A_{ik}}{d_k}\ge \frac{d_j^*}{n} +   d_j^*(\tau cut/2 - \frac{1}{n})\bigg)\\
  &\le& \exp\left( -\frac{c^2(d_j^*)^2}{2(v + cd_j^*/3)}\right)\\
  &\le& \exp\left( -\frac{c^2(d_j^*)^2}{2( d_j^*/n+ cd_j^*/3)}\right)  \\
  &=& \exp(-cd_j^*)
\end{eqnarray*}
where $c = \tau cut/3$.

On the other hand, 
\begin{eqnarray*}
P\left( \sum_{k \in S_*^c} \frac{A_{ik}A_{kj}}{d_k + \tau}\ge \tau(d_j^* + \tau)cut/2\right) &\le & P\left( \sum_{k =1}^{d_j^*} \frac{A_{ik}}{d_k+\tau}\ge  \tau(d_j^* + \tau)cut/2\right) \\
&\le& P\left( \sum_{k=1}^{d_j^*} A_{ik} > 0\right) \\
&\le& 1 -  \left(1 - \frac{d_j^*}{n}\right)^{d_j^*} \\
&\le& \frac{(d_j^*)^2}{n}
\end{eqnarray*}

To sum up,when $n$ is sufficiently large, we have that $\forall i \in S_*,\ j \in S_*^c$
\begin{eqnarray*}
P(T_{ij} \ge cut) \le \frac{d_j^*}{n} \min\left\{\frac{(d_j^*)^2}{n} + p_{in}(s-1)\frac{d_j^*}{n}, \ \ 2e^{-cd_j^*} \right\}.
\end{eqnarray*}
where $cut=  (2 \mathbb{E} D_{11} + \tau)^{-3} = (2(s-1)p_{in} + 2\lambda+\tau)^{-3}$, and $c = \tau  cut/3$.
%We can separate the nodes in $S_*^c$ into two parts $B_M and B_\infty$. Where $B_M = \{j, d_j^* \le M, M$ is some constant$\}$ and
%$B_\infty = \{j, d_j^* = O(n)$ and $d_j^* \rightarrow \infty \}$. Further assume that $0 \le \alpha_n \le 1$ proportion of the nodes belongs to $B_\infty$, that is 
% $|B_M| = (1-\alpha_n)(n-s)$ and  $|B_\infty| = \alpha_n(n-s)$.
%\begin{eqnarray*}
%P(\mbox{$S_*$ and $S_*^c$ are not separated }) &=& P(\cup_{i \in S_*, j \in S_*^c} T_{ij} \ge \frac{1}{\Delta^3})\\
%&\le& \sum_{i \in S_*, j \in S_*^c} P(T_{ij} \ge \frac{1}{\Delta^3})\\
%&\le&  s*\sum_{j \in S_*^c}\frac{d_j^*}{n} min\{\frac{d_j^*^2}{n} + p_{in}s\frac{d_j^*}{n}, 2e^{-cd_j^*} \} \\
%&\le& s*\sum_{j \in B_M} \frac{M}{n}(\frac{M^2}{n} + p_{in}s\frac{M}{n}) + s*\sum_{j \in B_\infty} \frac{d_j^*}{n}*2e^{-cd_j^*}\\
%&\le& s(1-\alpha_n)M(\frac{M^2}{n} + p_{in}s\frac{M}{n}) + s*\alpha_n* max_{j \in B_\infty} (d_j^**2e^{-cd_j^*})\\
%&\le& o(1)
%\end{eqnarray*}
%If $\alpha_n = O(n^{-1})$, then it is bounded by $O(n^{-1})$. Or we can make further assumptions on $d_j^*$ to make the bound better.

%The equation 
%\begin{eqnarray*}
%P(T_{ij} \ge  cut) \le \frac{d_j^*}{n} min\{\frac{(d_j^*)^2}{n} + p_{in}s\frac{d_j^*}{n}, 2e^{-cd_j^*} \}.
%\end{eqnarray*}
%is the fundamental quantity.  We will eventually need to multiply by $s$ and sum over $d_j^*$.   So, if $n$ times the above equation converges to zero (for any $d_j^*$), then we are ok. 
Next we show that, for any $i \in S_*, j \notin S_*$, $ P(T_{ij} \ge cut) = O(n^{3\epsilon-2} )$, where $\epsilon > \log_n(\frac{1}{c})$.

\textbf{Case 1:}  $d_j^* < n^\epsilon$. 
\begin{eqnarray*}
n P(T_{ij} \ge cut) &\le& \min\left\{\frac{(d_j^*)^3}{n} + p_{in}s\frac{(d_j^*)^2}{n}, \  2d_j^*e^{-cd_j^*} \right\} \\
&\le& \frac{(d_j^*)^3}{n} + p_{in}s\frac{(d_j^*)^2}{n}\\
&\le &\frac{n^{3\epsilon}}{n} + p_{in}s\frac{n^{2\epsilon}}{n}\\
&=& O(n^{3\epsilon - 1})
\end{eqnarray*}

\textbf{Case 2:} $d_j^* \ge  n^\epsilon$. Notice that the derivate of $u(x) = 2d_j^*\exp(-cd_j^*)$ is $(1-cx)\exp(-cx)$. So, if $n^\epsilon>1/c$ then $u(d_j^*)\le u(n^\epsilon) = 2n^\epsilon \exp(-cn^\epsilon)$.  
\begin{eqnarray*}
n P(T_{ij} \ge cut) &\le& \min\left\{\frac{(d_j^*)^3}{n} + p_{in}s\frac{(d_j^*)^2}{n}, \  2d_j^*\exp(-cd_j^*) \right\} \\
&\le& 2d_j^*\exp(-cd_j^*)\\
&\le& 2n^\epsilon \exp(-cn^\epsilon)\\
&=& o(n^{-1})
\end{eqnarray*}
So, independent of $d_j^*$, 
\[P(T_{ij} \ge cut) = O(n^{3\epsilon - 2})\]
Putting the pieces together,
\begin{eqnarray*}
P(\mbox{$S_*$ and $S_*^c$ are not separated }) &\le&  s \sum_{j \in S_*^c}\frac{d_j^*}{n} \min\left\{\frac{(d_j^*)^2}{n} + p_{in}s\frac{d_j^*}{n}, \ 2e^{-cd_j^*} \right\} \\
&\le& s n O(n^{3\epsilon - 2})\\
&=& O(n^{3\epsilon - 1})
\end{eqnarray*}
%So, we need $\epsilon <1/3$ and 
%\[\epsilon> -\frac{\log(c)}{\log(n)}= \log_n(\frac{1}{c}).\]
Finally, recall that  $O = \{D_{ii} \le 2 \mathbb{E} D_{11}, \forall i \in S_*\}$,  we have that for any $i \in S_*$,  
\begin{eqnarray*}
P(\{\mbox{LocalTrans$(L_\tau, i, cut)$ returns $S_*$}\}) &\ge& 1 - \frac{1}{2}s^2(1 - p_{in}^2)^{s-2} - O(n^{3\epsilon - 1}) - P(O^c)\\
&\ge&  1 - \bigg(\frac{1}{2}s^2(1 - p_{in}^2)^{s-2} +  \\      
&& \hspace{.2in}  s \exp\{- \frac{3}{8}[(s-1)p_{in} + \lambda)\} + O(n^{3\epsilon - 1})\bigg).
\end{eqnarray*}
\end{proof}

\end{appendix}

\end{document}